\newtheorem{lemma}{Lemma}
\newtheorem{dfn}{Definition}
\Crefname{dfn}{Def.}{Defs.}
\Crefname{assumption}{Assumption}{Assumptions}
\DeclareMathOperator*{\argmax}{arg\,max}
\newcommand{\medcirc}{\ensuremath{\bigcirc}} 
\icmltitlerunning{General agents contain world models}
\begin{document}

\twocolumn[


\icmltitle{General agents contain world models}




\icmlsetsymbol{equal}{*}

\begin{icmlauthorlist}
\icmlauthor{Jonathan Richens}{gdm}
\icmlauthor{David Abel}{gdm}
\icmlauthor{Alexis Bellot}{gdm}
\icmlauthor{Tom Everitt}{gdm}
\end{icmlauthorlist}

\icmlaffiliation{gdm}{Google DeepMind}

\icmlcorrespondingauthor{Jonathan Richens}{jonrichens@google.com}

\icmlkeywords{Machine Learning, ICML}

\vskip 0.3in
]



\printAffiliationsAndNotice{} 

\begin{abstract}
Are world models a necessary ingredient for flexible, goal-directed behaviour, or is model-free learning sufficient?
We provide a formal answer to this question, showing that any agent capable of generalizing to multi-step goal-directed tasks must have learned a predictive model of its environment.
We show that this model can be extracted from the agent's policy, and that increasing the agent's performance or the complexity of the goals it can achieve requires learning increasingly accurate world models.
This has a number of consequences: from developing safe and general agents, to bounding agent capabilities in complex environments, and providing new algorithms for learning an agent's world model.

\end{abstract}

\section{Introduction}

A hallmark of human intelligence is the ability to perform novel tasks with minimal supervision, formalised by few-shot and zero-shot learning \citep{lake2017building}. With the emergence of these capabilities in language models \citep{brown2020language}, focus has shifted to developing general agents---systems capable of performing long horizon goal-oriented tasks in complex, real-world environments \citep{yao2022react,hao2023reasoning}. 
In humans this kind of flexible goal-directed behaviour relies heavily on rich mental representations of the world, i.e. world models \citep{johnson1983mental,ha2018world}, which are used to set abstract goals beyond immediate sensory inputs \citep{locke2013goal}, and to deliberatively and proactively plan actions \citep{bratman1987intention}.
Whether world models are necessary for achieving human-level AI has long been debated, pitting the challenges of learning models against the potential benefits they confer \citep{huang2020model}.  

\begin{figure}[h!]
    \centering
    \includegraphics[scale=0.2]{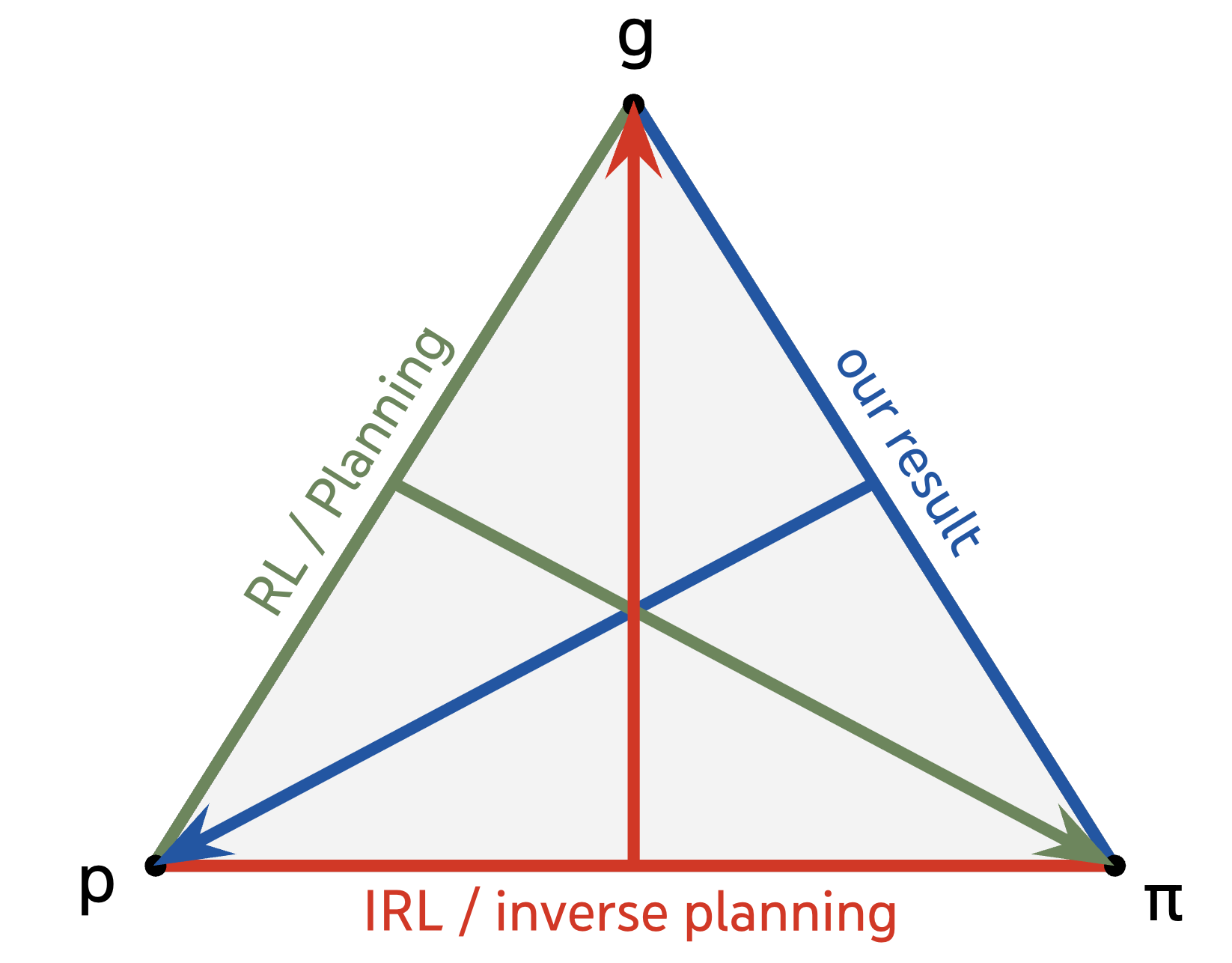}
    \caption{Our result complements previous insights from planning and inverse RL. While planning uses a world model and a goal to determine a policy, and IRL and inverse planning use an agent's policy and a world model to identify its goal, our result uses an agent's policy and its goal to identify a world model}
    \label{fig: irl}
\end{figure}

Explicitly model-based agents have achieved impressive performance across many tasks and domains \citep{hafner2023mastering,wang2023voyager,lecun2022path,raad2024scaling,schrittwieser2020mastering}, and having direct access to the agent's world model has benefits like being able to apply formal planning methods \citep{sutton2018reinforcement}, predicting the agent's behaviour in safety-critical domains \citep{amodei2016concrete,dalrymple2024towards}, reducing sample complexity \citep{hafner2019learning} and supporting transfer learning \citep{chua2018deep,zhu2023transfer}.
However, learning accurate models of real-world systems can be extremely challenging \citep{dulac2019challenges}, and the performance of model-based agents is fundamentally limited by their model's fidelity. 

In ``Intelligence without representation'', Brooks famously proposed that \textit{the world is its own best model}, and that all intelligent behaviours can emerge in model-free agents interacting through action-perception loops, without needing to learn explicit representations of the world \citep{brooks1991intelligence}.
This view has largely been borne out by the development of model-free agents capable of generalizing across a wide range of tasks and environments \citep{reed2022generalist,brohan2023rt,driess2023palm,black2024pi_0}.
This model-free paradigm aims to achieve truly general agents while side-stepping the challenges inherent in learning a world model. 
However, there is mounting evidence that model-free agents may in fact learn \textit{implicit} world models \citep{li2022emergent}, and may even exhibit emergent planning \citep{hou2023towards,bushinterpreting}. 

This raises a fundamental question: is there a model-free shortcut to human-level AI? 
Or is learning a world model necessary, with all the complexity this entails?
And if so, just how accurate and comprehensive do world models need to be to support a given level of capability? 
We provide a formal answer to these questions, showing that
\begin{quote}
    \textit{any agent that satisfies a regret bound for a sufficiently diverse set of simple goal-directed tasks must have learned an accurate predictive model of its environment.}
\end{quote}
Specifically, we consider environments described by a fully observed Markov process, and propose a minimalist definition of general agents as goal-conditioned policies \citep{liu2022goal} that satisfy a regret bound for a large set of simple goal-directed tasks (such as steering the environment into a desired state).
We derive an algorithm that returns an approximation of the environment transition function (a world model) given the policy of any such agent, and show that the error in this approximation decreases as we increase the agent's performance or the complexity of the goals it can achieve. 
In other words, general agents contain world models, with all the information required to simulate the environment encoded in their policy. 
Importantly, we prove this for any agent that satisfies a regret bound, regardless of the details of its training and architecture and without imposing rationality assumptions.

The necessity of learning a world model has profound consequences for how we develop general AI systems, how capable these systems can ultimately be, and how we can ensure agents are safe and interpretable.
We explore these consequences and others in \Cref{section: discussion}.
A more immediate consequence is that in proving our result we derive new algorithms for extracting world models from general agents. 
We demonstrate this in \Cref{section: experiments}, and show that our algorithms can recover accurate world models even when the agent strongly violates our competence assumptions.
In \Cref{section: related work} we then discuss related work, including inverse reinforcement learning and mechanistic interpretability. 

\section{Setup}

\subsection{Notation}
Capital letters denote random variables $X$ and lower case letters $x$ denote a value or state $X = x$.
Bold letters denote sets of variables $\bm X =$ $\{X_1, X_2, \ldots, X_m\}$ and $\bm x$ denotes the joint state $\{x_1, x_2, \ldots, x_m\}$.
Square brackets denote a proposition, e.g. $[X = x]$ is True if $X = x$ and False otherwise.

\subsection{Environment}

We assume the environment is a controlled Markov process (cMP) \citep{puterman2014markov,sutton2018reinforcement}, which is a Markov decision process without a specified reward function or discount factor. 
Formally, a cMP consists of a set of states $\bm S$, a set of actions $\bm A$, and a transition function $P_{ss'}(a) = P(S \!=\! s' \!\mid\! A\! =\! a, S\! = \!s)$. 
We refer to a sequence of state--action pairs over time as a \textit{trajectory}, $\tau = (s_0,a_0,s_1,a_1,\ldots)$ and a finite prefix of $\tau$ as a \emph{history}, $h_t = (s_0,a_0,\ldots,s_t)$.

\begin{restatable}[Controlled Markov process]{dfn}{cmp}\label{def: cmp}
 A controlled Markov process (cMP) is a Markov decision process (MDP) without a specified reward function or discount factor. 
 It is defined by the tuple $\left( \bm S, \bm A, P_{ss'}(a)\right)$ where $\bm S$ is the state space, $\bm A$ is the action space, and $P_{ss'}(a) = P(S = s' \mid A = a, S = s)$ is the transition function. 
\end{restatable}

To derive our results we make the standard assumptions that the environment is finite, communicating, and stationary, meaning every state is reachable from every other state under some finite sequence of actions, and transition probabilities do not change over time.
Furthermore we assume $|\bm A| \geq 2$ so that the environment can support non-trivial policies.
\begin{restatable}[]{assumption}{assumptionenv}\label{assumption: environment}
We assume the environment is described by a communicating, stationary, finite, controlled Markov process (\Cref{def: cmp}) with at least two actions. 
\end{restatable}
For further discussion of these standard assumptions see \citealp{puterman2014markov,sutton2018reinforcement}.



\subsection{Goals}\label{section: goals}

Our aim is not to provide a complete definition of goal-directed behaviour, but to define a simple and intuitive class of goals we might reasonably expect an agent to be capable of. 
In many settings including planning \citep{ghallab2004automated}, goal-conditioned reinforcement learning \citep{liu2022goal}, and control theory \citep{aastrom2021feedback}), the simplest goals are desirable states of the world (goal states), and a goal is achieved by the agent steering the environment into one of these goal states. 
More generally, goal-directed behaviour can involve a sequence of sub-goals to be achieved in a particular order, and may include desired actions as well as environment states.
This class includes instruction following, which is the type of goal-directed behaviour we typically desire of AI agents.

To describe these sequences of sub-goals (sequential goals) we use Linear Temporal Logic (LTL) \citep{pnueli1977temporal,baier2008principles}, which is commonly used to specify tasks and temporal objectives for agents \citep{littman2017environment,li2017reinforcement,hasanbeig2019reinforcement,dzifcak2009and,ding2014optimal} including more recently for goal-conditioned reinforcement learning agents \citep{vaezipoor2021ltl2action,qiu2023instructing,jackermeier2024deepltl}. 
An LTL expression $\varphi$ assigns a truth value to each trajectory (denoted $\tau \models \varphi$), which is true iff $\tau$ satisfies the LTL expression.
Concretely, we define a goal as a pair $(\mathcal O, \bm g)$ where $\bm g$ is a set of goal states and $\mathcal O$ is a temporal operator specifying a time horizon within which the goal states should be reached. 
For our results it will be sufficient to restrict our attention to two temporal operators: Eventually $(\Diamond)$, where the goal state must be reached at any future time, and Next $(\medcirc)$, where the next state must be a goal state, e.g. to capture the immediate consequences of an agent's actions.
In the absence of a temporal operator, the goal condition must be satisfied in the current time step, which we refer to as Now and represent with the trivial (True) operator $\top$.  
We denote goals as $\varphi := \mathcal O([(s, a) \in \bm g])$. 
For example, $\varphi = \Diamond([S = s])$ specifies that state $s$ must eventually be reached.
See \Cref{appendix: goals} for further discussion. 

\begin{restatable}[Goals]{dfn}{ltlgoals}\label{def: ltl goals}
A goal $\varphi$ is an LTL expression of the form $\varphi = \mathcal O ([(s,a)\in \bm g])$ where,
\begin{itemize}
    \item $\bm g$ is a set of goal-states, a sub-set of the joint states of the environment-agent system $(s, a) \in \bm S \times \bm A$,
    \item $\mathcal O$ is a temporal operator specifying the time horizon for reaching $\bm g$. We restrict to $\mathcal O\in \{\medcirc, \Diamond, \top\}$ where $\medcirc$ = Next, $\Diamond$ = Eventually, $\top$ = Now.
\end{itemize}
\end{restatable}
Using \Cref{def: ltl goals} we can construct composite goals of increasing complexity by either combining goals in sequence (where goal $\varphi_A$ must be achieved before goal $\varphi_B$) or in parallel (where satisfying either goal $\varphi_A$ or goal $\varphi_B$ is sufficient). 
We use $\psi = \langle \varphi_1, \dots, \varphi_n\rangle$ to denote a sequence of sub-goals, where the agent must satisfy $\varphi_1$ before moving on to $\varphi_2$, and so on.
Here $\psi$ is also an LTL expression, which we provide a formula for in \Cref{appendix: goals}.
We refer to $n$ as the \textit{depth} of $\psi$, i.e. the number of sub-goals the agent must satisfy to satisfy $\psi$ (also known as the temporal height, \citet{demri2002complexity}).
Parallel composition is achieved by taking the disjunction (OR) of two or more (sequential) goals, i.e. for $\psi' = \psi_1 \vee \psi_2$, $\tau \models \psi'$ is true iff $\psi_1$ or $\psi_2$ are satisfied by $\tau$. 
Finally, $\bm \Psi$ denotes the set of all composite goals for a given environment, and $\bm \Psi_n$ to denote the set of all compositions of goals (\Cref{def: goals}) of depth at most $n$. 

\begin{restatable}[Composite goals]{dfn}{composite}\label{def: goals}
A sequential goal $\psi$ is an ordered sequence of sub-goals (\Cref{def: ltl goals}) $\psi = \langle \varphi_1, \ldots \varphi_n\rangle$, where the agent must achieve sub-goal $\varphi_i$ before $\varphi_{i+1}$. 
The depth of a sequential goal is the number of sub-goals $\text{depth}(\psi) = n$. 
A composite goal is a disjunction of one or more sequential goals $\psi = \bigvee_{i=1}^m \psi_i$, i.e. the agent must achieve any sub-goal $\psi_i$ to achieve $\psi$. The depth of a composite goal is the max depth of its sub-goals $\text{depth}(\psi) = \max_{\psi_i} \text{depth}(\psi_i)$. 
$\bm \Psi_n$ is the set of all composite goals $\psi$ with $\text{depth}(\psi)\leq n$. 
\end{restatable}


\textbf{Example:} A maintenance robot is given the task of fixing a faulty machine, or finding an engineer and alerting them that the machine is broken. 
Fixing the machine requires performing a sequence of predetermined actions $a_1, a_2, \ldots, a_N$ and each time attaining the desired outcome $s_1, s_2, \ldots, s_N$, which can be represented as the sequential goal $\psi_1 = \langle\varphi_1, \varphi_2, \ldots, \varphi_N \rangle = [A = a_1, S = s_1]\wedge \medcirc ([A = a_1, S = s_1]\wedge \medcirc (\ldots))$ (using simplified notation for $[(s, a) \in \bm g]$). 
Finding and alerting the engineer requires the robot to navigate to an engineer $S =\! s_\text{eng}$ and alerting them $A \!=\! a'$, $\psi_2\! =\! \Diamond ([S\! =\! s_\text{eng}, A \!= \!a'])$. 
The robot's goal can be represented as the composite goal $\psi = \psi_1 \vee \psi_2$.

\subsection{Agents}


Our aim is to formulate a minimalist definition of an agent capable of achieving a range of goals in its environment. 
To this end we focus on goal-conditioned agents \citep{liu2022goal,schaul2015universal}, which are policies $\pi$ that map histories and goals to actions, $\pi : h_t ,\psi \mapsto a_t$ (\Cref{fig:agent environment}).
Note this does not restrict us to agents that can condition their actions on the full history of the environment, as any policy (e.g. a Markov policy) can be represented in this way.
For simplicity, we assume that the environment is fully observed by the agent, and that the agent follows a deterministic policy. 
\begin{figure}[]
    \centering
    \includegraphics[width=\columnwidth]{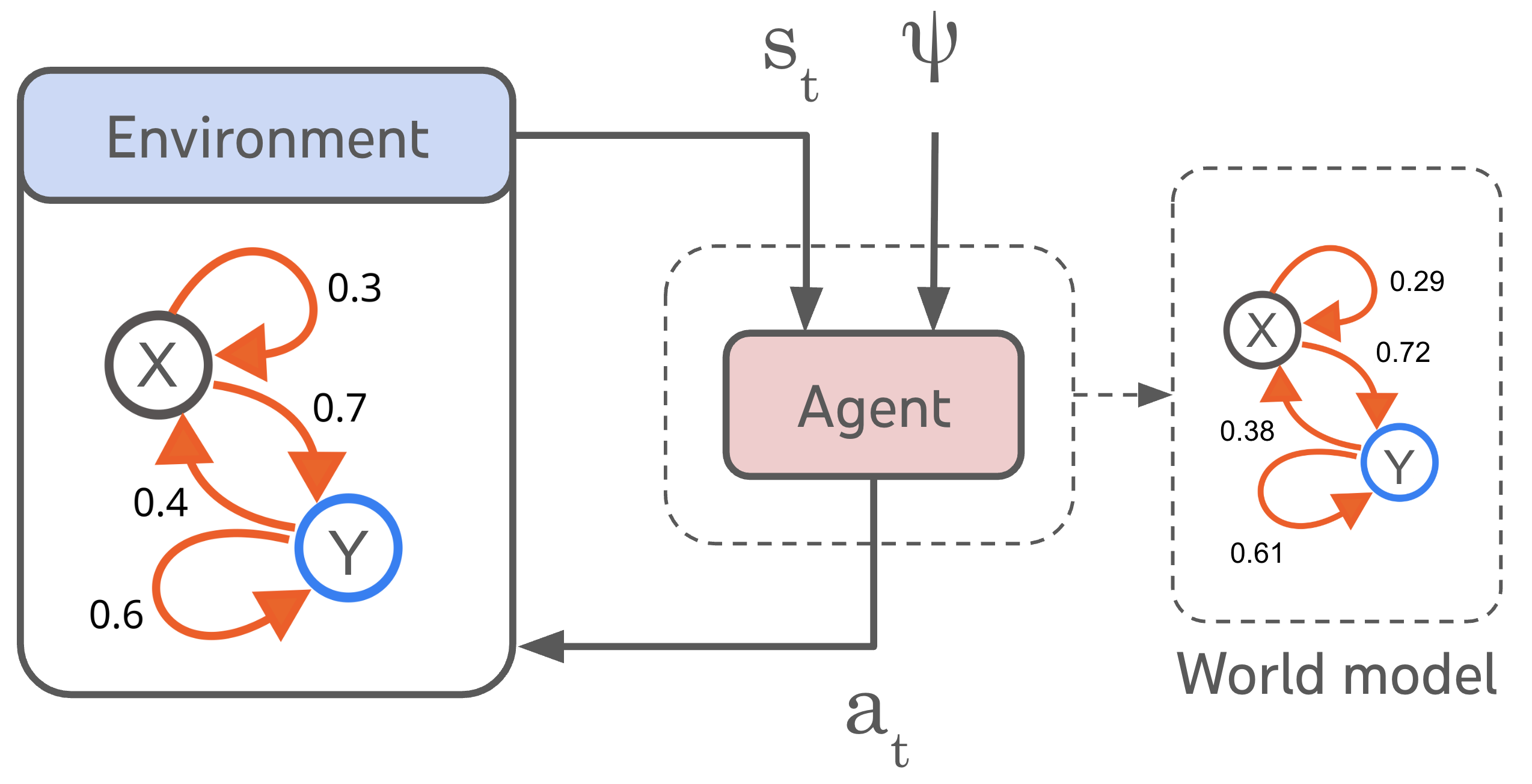}
    \caption{The agent-environment system. Agents are maps from states $s_t$ (or histories) and goals $\psi$ to actions $a_t$. The dashed line represents \Cref{alg:estimate_p}, which recovers the environment transition probabilities from this agent map.} 
    \label{fig:agent environment}
\end{figure}
This leads to a natural definition of an optimal goal-conditioned agent for a given environment and set of goals $\bm \Psi$, which is a policy that maximizes the probability that $\psi$ is achieved, for all $\psi \in \bm \Psi$. 
\begin{restatable}[optimal goal-conditioned agent]{dfn}{optimalagent}\label{def: optimal ltl agent}
For a given set of goals $\bm \Psi$ (\Cref{def: goals}) an optimal agent is a goal-conditioned policy $\pi^*(a_t \mid h_t ; \psi)$ where $\pi^*$ is deterministic and satisfies,
\begin{equation}\label{eq: optimal transition}
    \pi^* =  \argmax_\pi P(\tau \models \psi \mid \pi, s_0)
\end{equation}

$\forall \, s_0 \, \text{ s.t. } P(s_0) > 0$, where $s_0$ is the initial state of the environment at $t=0$, and $\forall$ $\psi \in \bm \Psi$. 
\end{restatable}

Real agents are rarely optimal, especially when operating in complex environments and for tasks that require coordinating many sub-goals over long time horizons.  
Hence, we relax \Cref{def: optimal ltl agent} to define a \emph{bounded} agent that is capable of achieving goals of some maximum goal depth $\bm \Psi_n$ with a failure rate that is bounded relative to the optimal agent. 
Bounded agents are defined by two parameters; i) a \textit{failure rate} $\delta\in[0,1]$, which places a lower bound on the probability that the agent achieves a goal compared to an optimal agent (analogous to regret), and ii) a maximum goal depth $n$, such that this regret bound holds only for goals with a depth less than or equal to $n$. 
This naturally captures the type of agents we are interested in---those which have some capability (parameterised by $\delta$) for achieving goals of some maximum complexity $\bm \Psi_n$.

\begin{restatable}[bounded goal-conditioned agent]{dfn}{boundedagent}\label{def: bounded agent}
A bounded goal-conditioned agent is a goal-conditioned policy $\pi(a_t \mid h_t ; \psi)$ satisfying,
\begin{equation}\label{eq: optimal transition}
    P(\tau \models \psi \mid \pi, s_0) \geq  \max\limits_\pi P(\tau \models \psi \mid \pi, s_0)(1 - \delta)
\end{equation}
$\forall \, \psi \in \bm \Psi_n$ where $n$ is the maximum goal depth and $s_0$ is the initial state of the environment at $t=0$.
\end{restatable}

Importantly, \Cref{def: bounded agent} only assumes a level of \textit{competence} for the agent. 
We do not, for example, impose any rationality assumptions on the agent as in \citep{von2007theory,savage1972foundations}, which are not satisfied by current agents \citep{ramansteer}. 

\textbf{Example:} Following the previous example, the performance of the maintenance robot is measured by the probability that it either fixes the machine or alerts an engineer, i.e. $P(\tau \models \varphi_1 \vee \varphi_2 \mid \pi, s_0)$.
This intuitively involves weighing up the two possible courses of action; if the repair is difficult then directly attempting it could lead to failure, and finding an engineer is the better course of action. 
Or if the probability of finding an engineer is very low, attempting to fix the machine may be the best strategy. 
Whatever the agent chooses to do, we can measure its performance relative to the probability that the optimal agent will solve the task, $P(\tau \models \varphi_1 \vee \varphi_2 \mid \pi^*, s_0)$.

\subsection{World models}






We are interested in the role of world models in goal-directed behaviour. 
Hence we focus on predictive world models, which can be used by agents to plan.
This follows the definition of world models used in reinforcement learning (RL), as opposed to the use of the term to describe representations of the environment state alone (e.g. in \citet{li2022emergent, gurnee2023language}). 
For model-based RL agents, explicit world models are usually one-step predictors of the environment state \citep{sutton2018reinforcement}, which in Markovian environments are sufficient to predict the evolution of the environment under arbitrary policies.
We define a world model as any approximation $\hat P_{ss'}(a)$ of the transition function of the environment (\Cref{def: cmp}) $P_{ss'}(a)= P(S_{t+1} \!= \!s'\mid A_t \!=\! a, S_t \!=\! s)$, with bounded error $| \hat P_{ss'}(a)-P_{ss'}(a) |\leq \epsilon$.

\section{Results}

Our main result is a proof by reduction. 
We assume the agent is a bounded goal-conditioned agent (\Cref{def: bounded agent}), i.e. that it has some (lower bounded) competency at goal-directed tasks of some finite depth $n$ (\Cref{def: goals}),
and then prove that an approximation of the environment's transition function (a world model) is determined by the agent's policy alone, with bounded error. 
Hence, learning such a goal-conditioned policy is informationally equivalent to learning an accurate world model.

\begin{restatable}[]{theorem}{maintheorem}\label{theorem: main}
Let $P_{ss'}(a) = P(S_{t+1}\!=\!s'\! \mid\! A_t\!=\!a, S_t\!=\!s)$ be the transition probabilities of an environment satisfying \Cref{assumption: environment}.
Let $\pi$ be a goal-conditioned agent (\Cref{def: bounded agent}) with a maximum failure rate $\delta$ for all goals $\psi \in \bm \Psi_n$ where $\bm \Psi_n$ is the set of all composite goals with maximum goal depth $n > 1$. 
$\pi$ fully determines a model for the environment transition probabilities $\hat P_{ss'}(a)$ with errors satisfying
\[
\left| \hat P_{ss'}(a) - P_{ss'}(a) \right| \leq\sqrt{\frac{2 P_{ss'}(a) (1-P_{ss'}(a))}{(n-1)(1 - \delta)}}
\]
for any $n,\delta$, and for $\delta \ll 1$, $n \gg 1$ the error scales as, 
\[
\left| \hat P_{ss'}(a) - P_{ss'}(a)  \right| \sim \mathcal O \left (\delta/\sqrt{n}\right) + \mathcal O (1/n)
\]
Proof in \Cref{appendix: main proof}.
\end{restatable}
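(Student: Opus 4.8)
The plan is a constructive reduction: I will read an estimate $\hat P_{ss'}(a)$ directly off the policy $\pi$ by feeding it carefully designed goals and recording which actions it commits to, then bound the error against the true $P_{ss'}(a)$. The shape of the target bound is a strong hint about the right construction. The factor $\sqrt{2P_{ss'}(a)(1-P_{ss'}(a))/(n-1)}$ is exactly what Chebyshev's inequality produces for a $\mathrm{Bin}(n-1,P_{ss'}(a))$ variable at confidence level $\tfrac12$: setting the tail bound $\tfrac{(n-1)P(1-P)}{t^2}$ equal to $\tfrac{1-\delta}{2}$ gives the window $t=\sqrt{2(n-1)P(1-P)/(1-\delta)}$, and dividing by $n-1$ reproduces the theorem verbatim. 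This tells me (i) the depth-$n$ goal should manufacture $n-1$ independent Bernoulli$(P_{ss'}(a))$ trials, (ii) the estimator should be a \emph{median} of this binomial, and (iii) the regret $\delta$ should enter only by degrading the effective confidence from $\tfrac12$ to $\tfrac{1-\delta}{2}$.

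First I would build the gadget goal. To isolate the single number $P_{ss'}(a)$ I force the agent, via atoms of the form $[(s,a)\in\bm g]$, to occupy state $s$ and play the specific action $a$ at $n-1$ designated ``attempt'' steps, using irreducibility (\Cref{assumption: environment}) to route the agent back to $s$ between attempts; by the Markov property and stationarity each attempt independently lands in $s'$ with probability exactly $P_{ss'}(a)$, so the number of successes $X$ is $\mathrm{Bin}(n-1,P_{ss'}(a))$. The event ``at least $k$ of the $n-1$ attempts succeed'' is not itself an atom of \Cref{def: ltl goals}, but it is a disjunction over the $\binom{n-1}{k}$ choices of which attempts succeed, each choice a sequence of $\medcirc$/$\top$ sub-goals of depth $\le n-1$; hence it is a legal composite goal in $\bm\Psi_n$ whose optimal success probability is precisely $\Pr[X\ge k]$.

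Next I would turn the deterministic policy into a comparator. Presenting the agent with a disjunctive goal $\psi_A\vee\psi_B$ and recording which branch its first actions pursue reveals the sign of $p_A^\star-p_B^\star$ (the optimal success probabilities), because \Cref{def: bounded agent} forces a competent agent toward the more achievable branch; the slack $\delta$ makes this readout reliable only outside an ambiguity window, which is exactly what converts the clean $\tfrac12$ threshold into $\tfrac{1-\delta}{2}$. Using this comparator to locate the largest quota $\hat k$ for which the ``$X\ge k$'' goal still reads as more-likely-than-not pins down the median of $\mathrm{Bin}(n-1,P_{ss'}(a))$, and I set $\hat P_{ss'}(a)=\hat k/(n-1)$. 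Since the median of the binomial sits within $1$ of its mean $(n-1)P_{ss'}(a)$, Chebyshev at confidence $\tfrac{1-\delta}{2}$ bounds $|\hat k-(n-1)P_{ss'}(a)|$ by $\sqrt{2(n-1)P_{ss'}(a)(1-P_{ss'}(a))/(1-\delta)}$; dividing by $n-1$ yields the stated inequality, and a first-order expansion of $(1-\delta)^{-1/2}$ for $\delta\ll1$ together with the $1/(n-1)$ median-versus-mean offset gives the $\mathcal O(\delta/\sqrt n)+\mathcal O(1/n)$ asymptotics.

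The main obstacle I expect is making the $n-1$ attempts genuinely i.i.d.\ while staying inside the allowed goal fragment at depth $\le n$. Routing back to $s$ between attempts consumes steps and may pass through $s'$ incidentally, so the gadget must count only the designated attempt transitions and must guarantee that the forced action $a$ --- rather than the agent's own choice --- is what drives each trial; simultaneously the comparison argument must hold uniformly over initial states and over the sweep of $k$, so that a single $\delta$ controls every readout. Getting the bookkeeping of this construction to collapse to a clean $\mathrm{Bin}(n-1,P_{ss'}(a))$ with confidence degradation exactly $(1-\delta)$ is where the real work lies; the concentration step and the asymptotic expansion are then routine.
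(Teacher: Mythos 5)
Your construction is essentially the paper's: a binomial gadget built from composite goals that repeatedly force the agent to return to $s$, play $a$, and register whether the next state is $s'$; an either-or disjunction whose first-action readout compares the two branches' optimal success probabilities up to the regret slack; a sweep over the quota $k$ to locate a $\delta$-approximate median; and a Chebyshev bound to convert that into the stated error. Two bookkeeping points would need repair. First, each Bernoulli trial costs \emph{two} sub-goals under \Cref{def: ltl goals} (one $\Diamond([S=s,A=a])$ to set up the attempt and one $\medcirc[S=s']$ or $\medcirc[S\neq s']$ to record the outcome), so depth $n$ buys roughly $(n-1)/2$ trials, not $n-1$; the paper recovers the stated constant by pairing this with the \emph{one-sided} Chebyshev inequality at the correct threshold $\tfrac{1-\delta}{2-\delta}$ (which is what $P(X\le k^*)\ge(1-P(X\le k^*))(1-\delta)$ actually gives, rather than your $\tfrac{1-\delta}{2}$). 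Your two compensating factor-of-2 slips happen to land on the right formula, but neither step is right as written.

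The genuine gap is in the asymptotic claim. Expanding $(1-\delta)^{-1/2}$ inside the Chebyshev bound gives $\mathcal O(1/\sqrt{n})\,(1+\mathcal O(\delta))$, whose leading term does \emph{not} vanish as $\delta\to 0$; it cannot produce $\mathcal O(\delta/\sqrt{n})+\mathcal O(1/n)$, which at $\delta=0$ decays like $1/n$. The $\delta=0$ case is handled by the median-within-one-of-the-mean fact you cite, but for $\delta>0$ the switching point $k^*$ is only a $\delta$-relaxed median, and you need an anti-concentration argument showing that relaxing the quantile level by $\mathcal O(\delta)$ moves the quantile by only $\mathcal O(\delta\sqrt{n})$. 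The paper does this via Berry--Esseen plus a Taylor expansion of $\Phi^{-1}$ about $1/2$ (local slope $\sim\sqrt{2\pi n p(1-p)}$), which is exactly the ingredient missing from your sketch; Chebyshev alone can only localize $k^*$ to a window of width $\mathcal O(\sqrt{n})$ regardless of $\delta$.
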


In \Cref{appendix: proof overview} we give a simplified overview of the proof of \Cref{theorem: main}.  
We derive an algorithm that queries the goal-conditioned policy with different goals $\psi \in \bm \Psi_n$ which correspond to either-or decisions between two incompatible sub-goals $\psi = \psi_a \vee \psi_b$.  
As the agent satisfies a regret bound, its choice of action encodes information about which of the sub-goals has a higher maximum probability of being satisfied, and this information can be used to estimate the transition probabilities $\hat P_{ss'}(a)$. 
We then prove that this estimate satisfies the error bounds stated in \Cref{theorem: main}.  
Note that while the statement of \Cref{theorem: main} assumes the agent has a maximum failure rate (regret bound) $\delta$ for all $\psi \in \bm \Psi_n$, in fact our proof only requires the agent satisfies this regret bound for a small subset of $\bm \Psi_n$ consisting of $n$ composite goals (see discussion of emergent capabilities in \Cref{section: discussion}).

Our algorithm for recovering a bounded-error world model from a bounded goal-conditioned agent (\Cref{alg:estimate_p}) is detailed in \Cref{appendix: algorithms}. 
It is universal, meaning the same algorithm works for all agents satisfying \Cref{def: bounded agent} and all environments satisfying \Cref{assumption: environment}. 
It is also unsupervised; the only input to the algorithm is the agent's policy $\pi$.
The existence of this algorithm, which converts $\pi$ into a bounded error world model, implies the world model is encoded in the agent's policy, and learning such policy is informationally equivalent to learning a world model. 
Formally, the approximate world model $\hat P_{ss'}(a)$ is \textit{identifiable} given the agent's policy and our assumptions (see for example \citet{bareinboim2022pearl}). In \Cref{section: related work} we compare \Cref{alg:estimate_p} and its assumptions to methods for recovering world models in mechanistic interpretability, which similarly use the existence of a recovery map to establish that an agent has learned a world model.

\textbf{Properties of the world model.} The accuracy of the world model recovered from the agent in \Cref{theorem: main} increases as the agent approaches optimality ($\delta \rightarrow 0$), and/or as the depth $n$ of sequential goals it can achieve increases.
A key consequence of the derived error bounds is that for any $\delta < 1$ we can recover an arbitrarily accurate world model if we can make $n$ sufficiently large. 
Therefore, in order to achieve long horizon goals even with a high failure rate $\delta \sim 1$, the agent must have learned a highly accurate world model. 
The error bounds also depend on the transition probabilities, and dividing both sides of the bound by $P_{ss'}(a)$ shows that the relative error $\hat P_{ss'}(a)/P_{ss'}(a)$ can become very large for $P_{ss'}(a) \ll 1$.
This means that for any $\delta > 0$ and/or finite $n$, there can exist low probability transitions that the agent is not required to learn.
This matches the intuition that sub-optimal or finite-horizon agents need only learn relatively sparse world models covering the more common transitions, but achieving goals with higher success rate or longer horizons requires higher resolution world models.

\Cref{theorem: main} imparts only a trivial error bound on the world model we can extract from agents whose maximum goal depth is $n = 1$. It is not immediately clear if this means that agents that only optimize for immediate outcomes (\textit{myopic} agents) do not need to learn a world model, or if \Cref{theorem: main} simply fails to capture this class of agents. 
To resolve this we derive a result for myopic agents, which satisfy a regret bound for $n=1$ and only a trivial regret bound ($\delta = 1$) for any $n>1$.

\begin{restatable}[]{theorem}{secondtheorem}\label{theorem: second}
Let the set of myopic goals $\bm \Psi_\text{myopic}$ be the subset of depth-1 composite goals $\bm \Psi_1$ such that the goal state(s) must be attained immediately after the agents first action, $\varphi = \medcirc [(s, a) \in \bm g]$.
We define an optimal myopic agent as a policy $\pi^*(a_t\mid h_t, \psi)$ that is optimal for all $\psi \in \bm \Psi_\text{myopic}$. 
For an environment satisfying \Cref{assumption: environment}, any bounds on the transition probabilities $|\hat P_{ss'}(a) - P_{ss'}(a)|\leq \epsilon$ that can be determined from $\pi^*$ are trivial ($\epsilon =1$) and tight.
Proof in \Cref{appendix: second theorem}. 
\end{restatable}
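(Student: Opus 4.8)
The plan is to prove the statement as a \emph{non-identifiability} result: I will construct two environments, both satisfying \Cref{assumption: environment}, that induce exactly the same optimal myopic policy $\pi^*$ yet disagree on a single transition probability by an amount approaching $1$. Since $\pi^*$ cannot distinguish them, the set of transition functions consistent with $\pi^*$ has sup-norm diameter approaching $1$ in that entry, so the entry is determined only up to the full interval $[0,1]$. This gives the trivial bound $\epsilon = 1$, and it is tight because probabilities are confined to $[0,1]$ (the disagreement cannot exceed $1$) while the construction saturates it.

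First I would characterise exactly what information $\pi^*$ carries. For a myopic goal $\varphi = \medcirc[(s,a)\in \bm g]$, the agent's first action $a_0$ influences only the next \emph{state} $s_1$, whereas the next action $a_1$ is chosen freely by the optimal agent; hence upon reaching any $s_1$ whose projection lies in the goal, the agent can always pick an $a_1$ with $(s_1,a_1)\in \bm g$. Writing $\bm g_S = \{s' : \exists a,\ (s',a)\in \bm g\}$ for the state projection, the success probability under optimal play is $\sum_{s'\in \bm g_S} P_{ss'}(a_0)$, and a depth-$1$ disjunctive (parallel) composition $\bigvee_i \varphi_i$ reduces to the myopic goal with target $\bigcup_i (\bm g_i)_S$. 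Consequently $\pi^*$ reveals nothing beyond $\argmax_{a}\sum_{s'\in G} P_{ss'}(a)$ for every state $s$ and every subset $G\subseteq \bm S$, i.e. purely \emph{ordinal} comparisons of subset-sums of transition probabilities, with no access to their magnitudes.

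Next I would turn this into a concrete counterexample. Over two states $\bm S=\{1,2\}$ and two actions, let environment $A$ have $a_1$ deterministically self-loop at state $1$ (so $P^A_{11}(a_1)=1$) and $a_2$ move to state $2$, with the irreducible completion $a_1\!\to\!1,\ a_2\!\to\!2$ at state $2$. For small $\eta>0$, let environment $B_\eta$ agree with $A$ everywhere except that $a_1$ now leaves state $1$ with $P^{B_\eta}_{11}(a_1)=\eta$. Checking the only nontrivial subsets $G\in\{\{1\},\{2\}\}$ at each state shows the $\argmax$ over actions is unchanged ($a_1$ wins $\{1\}$ since $1,\eta>0$; $a_2$ wins $\{2\}$ since $1-\eta<1$), so $A$ and $B_\eta$ share the identical optimal myopic policy while both remain irreducible, finite, and stationary with two actions. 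Yet $\lvert P^A_{11}(a_1)-P^{B_\eta}_{11}(a_1)\rvert = 1-\eta \to 1$, so the admissible range for this entry is all of $(0,1]$ and $\pi^*$ pins it down only up to the trivial width, establishing both triviality and tightness. An exactly-$1$ gap follows by instead sending $a_1$ entirely to state $2$ and breaking the resulting ties to match $A$'s outputs.

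The main obstacle is the reduction of the first step: I must argue rigorously that \emph{no} myopic goal leaks magnitude information, which requires covering the full class of \Cref{def: ltl goals} restricted to $\medcirc$, in particular goals whose target sets constrain the next \emph{action} and depth-$1$ disjunctive compositions, and showing each collapses to an ordinal comparison of state-projection subset-sums. The secondary care point is ensuring $A$ and $B_\eta$ agree on the \emph{entire} policy, over every state--goal pair rather than a few queries, and that both satisfy irreducibility; the perturbative construction with $\eta>0$ keeps every relevant comparison strict and thereby sidesteps any dependence on how the agent breaks ties.
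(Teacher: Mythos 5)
Your proposal is correct and follows essentially the same route as the paper: both reduce the optimal myopic policy to purely ordinal information (the argmax over actions of the probability of landing in the goal's state projection) and then exhibit distinct transition functions compatible with that same policy, so that the identified set for a transition probability spans $[0,1]$. The only difference is the compatibility construction---you build an explicit two-state strict-preference perturbation that avoids any reliance on tie-breaking, whereas the paper uses the degenerate action-independent transition function $P_{ss'}(a) = P_{ss'}$, which is compatible with \emph{every} optimal myopic policy (via ties) and therefore trivialises the bound for every transition of every environment at once; your version establishes the conclusion for one exhibited pair, which still suffices to refute any non-trivial universal bound.
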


\Cref{theorem: second} implies that there exists no procedure that can even partially determine the transition probabilities from the policy of a myopic agent. 
In the proof of \Cref{theorem: second} we show this by explicitly constructing a myopic agent that is optimal for any choice of $P_{ss'}(a)\in [0,1]$, and so the policy of such an agent can only impart trivial bounds on the transition probabilities. 
Therefore, learning a world model is not necessary for myopic agents---world models only become necessary for agents pursuing goals with multiple sub-goals and over multi-step horizons. 
 

\subsection{Experiments}\label{section: experiments}

\begin{figure}[]
    \centering    \includegraphics[width=\columnwidth]{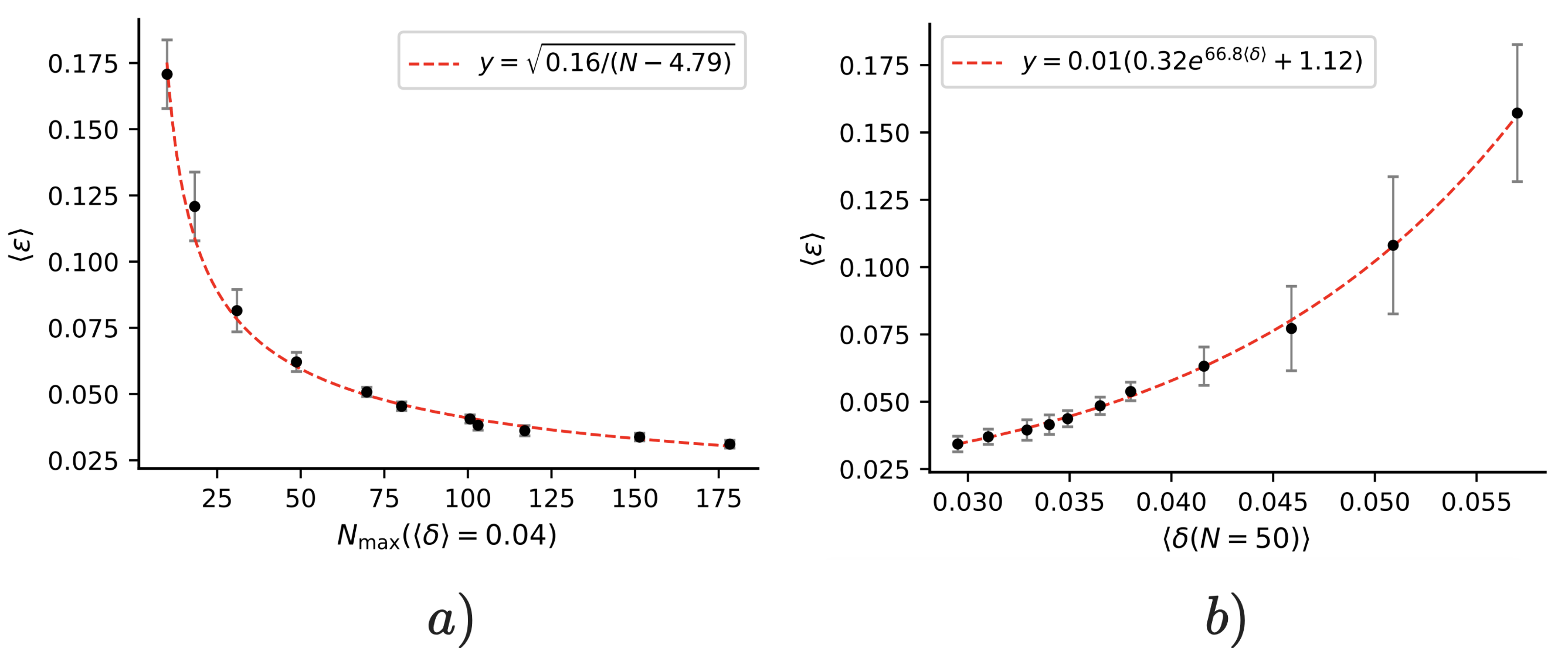}
    \caption{a) shows the mean error in the world model recovered by \Cref{alg:estimate_p_simple}, $\langle \epsilon \rangle$, decreases as the agent learns to generalize to higher depth goals. $N_\text{max}(\langle \delta \rangle = 0.04)$ is the maximum goal depth such that the agent achieves a mean regret $\leq 0.04$. The scaling is $\mathcal O(n^{-1/2})$, as with the scaling between the worst-case error $\epsilon$ and worst-case regret $\delta$ in \Cref{theorem: main}. b) shows the mean error scaling with $\langle \delta (n = 50)\rangle$, the mean regret the agent achieves for depth $n = 50$ goals. For both figures, error bars show 95\% confidence intervals for the mean over 10 experiments where we re-trained the agents with different experience trajectories of the same length.} 
    \label{fig:graphs}
\end{figure}

We demonstrate our procedure for recovering a world model from an agent, and how the accuracy of the model increases as the agent learns to generalize to more tasks (longer horizon goals). 
We also investigate if our algorithm can recover the transition function when the agent strongly violates our assumptions (\Cref{def: bounded agent}). 
Specifically, a realistic agent could be highly competent ($\delta \sim 0$) for some depth-$n$ goals but completely fail for others ($\delta = 1$). 
This agent would violate any non-trivial regret bound as in \Cref{def: bounded agent}, resulting in trivial model-error bounds in \Cref{theorem: main}. 
To explore this case we relax \Cref{def: bounded agent} and consider agents where the regret bound holds only on average over some set of goals $\bm \Psi$, i.e. $\langle \delta \rangle \leq k$ where $\langle \delta \rangle$ is the average value of $1 - P(\tau \models \psi \mid \pi, s_0) / \max_\pi P(\tau \models \psi \mid \pi, s_0)$ over all $\psi \in \bm \Psi$. 
We then determine empirically how the average error $\langle \epsilon \rangle$ in the world model recovered by \Cref{alg:estimate_p} scales with the agent's average regret $\langle \delta \rangle$ (\Cref{fig:graphs} b)), where $\epsilon := |\hat P_{ss'}(a) - P_{ss'}(a)|$ and $\langle \epsilon \rangle$ is mean value of $\epsilon$ over all transitions $(s, a, s')$.

The environment used to test our algorithms is a randomly generated cMP satisfying \Cref{assumption: environment}, comprising 20 states and 5 actions with a sparse transition function. 
We train our agent using trajectories sampled from the environment under a random policy, and we increase the competency of our agent by increasing the length of the trajectory it is trained on, $N_\text{samples}$. 
See \Cref{appendix: experiments} for further details on the agent and experimental setup.  
We recover the world model using \Cref{alg:estimate_p_simple}, a simplified version of \Cref{alg:estimate_p}.

As we increase $N_\text{samples}$ we observe the agent can generalize to longer horizon goals, captured by $N(\langle \delta \rangle = k)$ which is the maximum goal depth $n$ such that the agent achieves an average regret $\langle \delta \rangle = k$ for goals of depth $n$. 
We find that for all $N_\text{samples}$ tested, and for all goal depth $n$, our agent achieved a worst-case regret $\delta = 1$ for some goals, i.e. the agent violates any non-trivial regret bound of the form \Cref{def: bounded agent}. 
Nevertheless, we find that \Cref{alg:estimate_p_simple} recovers the transition function with a low average error (\Cref{fig:graphs} b)), which scales as $\sim \mathcal O (n^{-1/2} )$, like the error bound in \Cref{theorem: main}.  
Hence, in spite of the agent violating our assumptions and achieving maximal regret for some goals, the average error has a similar decay with the goal depth as when the worst-case regret bound (\Cref{def: bounded agent}) is satisfied. 
Therefore, we can still accurately recover the transition function from the agent as long as it achieves a relatively low average regret for long horizon goals.  

\section{Discussion}\label{section: discussion}

We now discuss the consequences of \Cref{theorem: main} and its limitations. 


\textbf{No model-free path to general agents.}
\Cref{theorem: main} implies that any agent that satisfies a regret bound as in \Cref{def: bounded agent} must have learned an implicit world model, and the accuracy of the model increases as the regret $\delta$ decreases or the maximum goal depth $n$ increases.
In other words, there is no way to train an agent capable of generalizing to long horizon tasks without learning a world model, and the fidelity of the model bounds the agent's capabilities. 
This removes a key motivation for model-free approaches, as learning a world model cannot be avoided. 
On the other hand, it motivates explicitly model-based architectures \citep{lecun2022path,hafner2023mastering, schrittwieser2020mastering}, which can directly attack the model learning problem, and can exploit their benefits in terms of sample efficiency \citep{hafner2019learning}, planning \citep{sutton2018reinforcement}, interpretability \citep{glanois2024survey} and safety \citep{amodei2016concrete}.

\textbf{Emergent capabilities.} An accurate world model is a powerful tool---it can be used to determine low-regret policies for \textit{any} well-defined objective, without requiring further interaction with the environment or task-specific data. 
Hence, implicit world models have been proposed as an explanation for emergent capabilities in foundation models \citep{brown2020language,li2022emergent,abdou2021can}. 
Our results support this hypothesis by revealing a mechanism by which implicit world models could emerge during training. 
To minimize regret across a variety of training tasks, agents are required to learn an implicit world model, which in turn could support generalization to a wide range of tasks the agent was never explicitly trained on.
Note that for simplicity we have stated \Cref{theorem: main} with the assumption that the agent can generalize to any depth-$n$ composite goal $\bm \Psi_n$, but this is not the strongest statement of the result. 
In the proof (\Cref{appendix: main proof}) the agent is required to generalize only to a small subset of $\bm \Psi_n$, comprising $\mathcal O(n|\bm A||\bm S|^2)$ simple composite goals (see also \Cref{alg:estimate_p}). 
There are likely many such choices of subsets of $\bm \Psi_n$ (e.g. a different sufficient set is used in \Cref{alg:estimate_p_simple}, \Cref{appendix: algorithms}), and there are likely other tasks beyond achieving composite goals (\Cref{def: goals}) that are sufficient to derive the result. 
Our findings therefore point to the existence of sets of simple tasks, where learning to perform these tasks implies sufficient world knowledge to (in principle) generalize to any task.

Beyond planning, world models support domain adaptation \citep{chua2018deep}, reasoning about uncertainty \citep{lockwood2022review} and social cognition \citep{rabinowitz2018machine}. 
With additional structural assumptions, they can also support causal reasoning \citep{pearl2018theoretical}, simulating counterfactual trajectories and imagination \citep{racaniere2017imagination}, and reasoning about intent \citep{ward2024reasons} and attribution \citep{chockler2004responsibility}. 
\Cref{theorem: main} provides an explanation for how this wide range of cognitive abilities, associated primarily with human-level intelligence \citep{tomasello2022evolution}, can emerge from simple goal-directed behaviour.
Compared to prominent theories of how these capabilities arose in nature, which propose special environmental factors such as resource uncertainty \citep{hills2015exploration} and social complexity \citep{dunbar1998social} as the driving force for this emergence, the explanation derived from \Cref{theorem: main} is arguably simpler. 
If an agent was required to solve simple goal-directed tasks without repeated attempts (zero-shot), perhaps due to risk of death, this would require the agent to satisfy a regret bound as in \Cref{def: bounded agent}, and hence learn a world model capable of supporting these capabilities, without needing to invoke complex environment factors or multi-agent dynamics.

\textbf{Safety.} Several proposals for AI safety and alignment require an accurate predictive model of the agent-environment system to verify the safety of plans \citep{bengio2024can,dalrymple2024towards}, safely explore \citep{brunke2022safe}, predict human responses \citep{leike2018scalable}, avoid problematic incentives \citep{farquhar2022path}, and incorporate model-based concepts into decision making such as intent \citep{ward2024reasons}, deception \citep{ward2023honesty} and harm \citep{richens2022counterfactual,bengio2024can}.
Other proposals focus on passive oracles (essentially world models), avoiding agents altogether due to their inherent safety issues \citep{bengio2025superintelligent,armstrong2017good}.

One major impediment to these approaches is the reasonable expectation that the capabilities of model-free agents will outpace our ability to learn accurate predictive models of complex real-world environments. 
There are already several examples of AI systems that can solve prediction tasks in domains we cannot yet model \citep{abramson2024accurate,merchant2023scaling}, and it is intuitively hard to interpret, audit and correct the behaviour of black-box agents operating in environments we do not understand, or where the agent has superior world knowledge than the supervisor \citep{christiano2021eliciting}. 
Our results point to a solution, providing a theoretical guarantee that we can extract an accurate world model from any sufficiently capable model-free agent. 
Importantly, the fidelity of this model increases with the agent's capabilities, especially as the agent gets better at achieving goals over long time horizons---precisely the regime where safety concerns such as reward hacking become important \citep{farquhar2025mona}. 
Future work should explore developing scalable algorithms for eliciting these world models and using them to improve agent safety.


\textbf{Limits on strong AI.} Our ability to learn accurate models of the world is fundamentally limited by the openness of real-world systems, their complexity and unpredictability, confounding, limited data, and the curse of dimensionality \citep{box1987empirical}.
\Cref{theorem: main} implies that training an agent capable of generalizing to a wide range of tasks in the real world is extremely hard---at least as hard (and possibly much harder) than learning an accurate model of the world. 
While `thinking slow' \citep{kahneman2011thinking} deliberative planning and reasoning is not necessary (or even desirable) in every situation---for example, humans also generalize to novel tasks through `thinking fast' heuristics \citep{tversky1974judgment} and similarity-based generalization \citep{shepard1987toward}---our results establish that for any agent, natural or artificial, their ability to generalize is ultimately bounded by their ability to learn how the world works.


One consequence is that regret-bounded agents (\Cref{def: bounded agent}) are effectively limited to domains that are `solvable', i.e. where we can feasibly learn a model of the underlying dynamics and use it to plan over long horizons.
In domains where this is infeasible, there can be no guarantee the agent will generalize (satisfy a non-trivial regret bound $\delta < 1$) for long horizon tasks ($n\gg 1$). 
Therefore, some amount of online learning will be necessary, which is limited by the speed of interaction with the environment. 
Note that our results are derived for the simplest non-trivial environments (\Cref{assumption: environment}), and it is likely these constraints will be even stronger in more realistic environments which incorporate partially observed states or non-Markovian dynamics.

\textbf{Limitations.} The proof of \Cref{theorem: main} considers only fully observed environments. 
It is not clear what an agent operating in a partially observed environment would have to learn about latent variables in order to achieve the same level of behavioural flexibility. 
It is important to clarify that \Cref{theorem: main} proves the existence of a world model encoded in the agent's policy, not its specific use (e.g. for planning), nor can we make deeper epistemological claims about what the agent knows about its environment \citep{fagin2004reasoning}.

\section{Related work}\label{section: related work}

\textbf{Inverse reinforcement learning} (IRL) \citep{ng2000algorithms} and inverse planning \citep{baker2007goal} involve determining an agent's reward function (or goal) given the transition function and the optimal policy.
Similarly, planning is the process of determining an optimal policy given the transition function and a goal (reward). 
Our result fills in the remaining direction, recovering the transition function given the agent's goal and their regret-bounded policy. 
In IRL the reward function can only be fully determined if we know the optimal policy across multiple environments \citep{amin2016towards}, and likewise we find that to fully determine the environment transition function we must know the optimal policy for multiple goals. 
\Cref{fig: irl} shows how our result relates to planning and IRL, where each process takes as input two elements from $\{$environment, goal, policy$\}$, and determines the missing third element.


\textbf{Mechanistic Interpretability} (MI) aims to uncover implicit world models within model-free agents \citep{abdou2021can,li2022emergent,2310.02207,karvonen2024emergent,hou2023towards,bushinterpreting}. 
This typically involves learning a map from a policy network's activations to features representing states $\bm S$ (e.g. the board states of a game \citep{li2022emergent}). 
The state-space (ontology) $\bm S$ is either assumed (as in supervised probing \citet{alain2016understanding}) or identified through unsupervised learning (as with SAEs \citet{bricken2023towards}). 
The causal role of these features in the agent's decision making is established by intervening on their representations and observing the policy changes consistently, as if the world state had changed. 

Our work also establishes that an agent has learned a world model by the existence of a recovery map, but crucially this map is from the agent's policy rather than its activations. 
This is strictly weaker (as the policy is a function of the activations), and so \Cref{alg:estimate_p} can be used even when activations are inaccessible (e.g. private weights). 
This also allows us to tie the existence of a world model to agent capabilities (regret bounds as in \Cref{def: bounded agent}) rather than the specifics of the agent architecture, and \Cref{alg:estimate_p} applies to all agents satisfying \Cref{def: bounded agent} and environments satisfying \Cref{assumption: environment}. 
By comparison, probes or SAEs are fit to a given agent-environment system, and may require retraining if either changes (e.g. through distributional shifts or weight updates). 
Also \Cref{alg:estimate_p} is unsupervised, whereas MI methods are at least partially supervised, which can lead to ambiguity as to where the world model is encoded (in the agent, the probe, or jointly). 

Another key difference is that we recover a predictive world model $\hat P_{ss'}(a)$ capturing environment dynamics, rather than simply a state space representation $\bm S$.
However, our aim is to prove the agent has learned the actual environment dynamics up to an error bound, not to recover the subjective world model used by the agent to generate its actions. 
As discussed in the paragraph `representation theorems' below, if we introduce additional consistency assumptions similar to those used in MI\footnote{
Consistency in MI requires the agent adapts their behaviour following interventions on their world model.
This amounts to assuming regret-bounded behaviour under interventions, which is tantamount to assuming the agent has a causal world model to begin with \citep{richensrobust}.
Hence, using this kind of interventional consistency to establish that an agent has a world model risks circular reasoning.}, we can recover the agent's subjective world model.
One drawback is that we may underestimate what an agent knows about its environment---e.g. agents could learn a world model but strongly violate \Cref{def: bounded agent} (e.g. due to errors in planning), so \Cref{alg:estimate_p} is not guaranteed to recover this world knowledge whereas methods like probing may succeed. 
However, \Cref{section: experiments} shows that at least in simple environments, our procedure can work well even when the regret bound is trivialised ($\delta = 1$).

\textbf{Causal world models.} \citep{richensrobust} provides a similar result to \Cref{theorem: main}, showing that an agent capable of adapting to a sufficiently large set of distributional shifts must have learned a causal world model. 
Our work has a different focus: we study an agent's ability to generalize to new goals (task generalization) rather than adapting to new environments (domain generalization). 
A surprising consequence of our result combined with \citet{richensrobust} is that domain generalization requires strictly more knowledge of the environment than task generalization.
To see this, consider a setting where the state comprises two variables $S = X \times Y$ and $X \rightarrow Y$. 
We can construct an optimal goal-conditioned agent (\Cref{def: optimal ltl agent}) given the transition function $P_{ss'}(a)= P(X_{t+1} = x', Y_{t+1} = y' \mid A_t = a, X_t = x, Y_t = y)$, as an optimal goal-conditioned policy can be determined by planning on this model. 
However, the causal relation between $X\rightarrow Y$ is non-identifiable from $P_{ss'}(a)$, i.e. almost all distributions $P_{ss'}(a)$ are compatible with both $X \rightarrow Y$ and $X \leftarrow Y$.
Therefore, task generalization does not require knowledge of the causal relation between concurrent environment variables $X_t$ and $Y_t$ whereas domain generalization does.
That said, in the cMP setting the transition function does encode a degree of causal information, and we leave it to future work to determine precisely what causal knowledge is required for an agent satisfying \Cref{def: bounded agent}.
This hints at an agential version of Pearl's causal hierarchy \citep{bareinboim2022pearl}, where different agent capabilities (like domain or task generalization) provably require different degrees of causal knowledge.

\textbf{LTL goal-conditioned agents.} LTL is the natural choice for expressing instructions, goals and safety constraints in reinforcement learning and planning \citep{camacho2019ltl}.
Recently, there have been several implementations of goal-conditioned agents that generalize zero-shot to arbitrary LTL goals \citep{qiu2023instructing,jackermeierdeepltl,vaezipoor2021ltl2action,kuo2020encoding}. 
This maps precisely onto the setting we study, and future work could explore using \Cref{alg:estimate_p} or variants to recover world models from these agents, and use them to debug agent behaviour.


\textbf{Representation theorems} such as \citet{savage1972foundations} and \citet{halpern2024subjective}, establish that agents satisfying certain rationality axioms behave as if they are maximizing the expected value of a utility function with respect to a world model. 
For example, \citet{savage1972foundations} can be used to `fit' a world model to an agent's behaviour, determining a unique utility function $U(s')$ and set of beliefs (a world model $\hat P_{ss'}(a)$) such that the policy that maximizes $\mathbb E_{\hat P}[U]$ is identical to the agents policy.  
However, this says nothing about what (if anything) the agent has learned about the true environment dynamics.
For example, we may be able to assign a specific world model and utility function to a purely random policy $\pi(a \mid s) = 1/|\bm A|$, but this clearly does not imply that learning a world model is necessary to generate a random policy. 
Instead of attempting to recover an agent's subjective world model, we aim to recover the true underlying dynamics of the environment from the policy of the agent.
In doing so, we show that learning such a policy implies learning these dynamics, and so the learnability of these dynamics bounds agent capabilities. 
Further, \Cref{theorem: second} establishes that an optimal myopic agent \textit{does not} need to learn the transition probabilities $P_{ss'}(a)$, and representation theorems typically focus on the myopic regime. 

We can recover something like the agent's subjective world model by changing \Cref{def: bounded agent} to the assumption that the agent is $\delta$-optimal with respect to its own world model $\mathcal M$, 
\begin{equation}
    P_\mathcal{M}(\tau \models \psi \mid \pi, s_0) \geq  \max\limits_\pi P_\mathcal{M}(\tau \models \psi \mid \pi, s_0)(1 - \delta)
\end{equation}
This amounts to assuming the agent has a world model, and that its behaviour is highly consistent with this world model (with consistency given by $\delta$), but stops short of assuming the agent is optimal with respect to its own beliefs. 
For example, $\delta > 0$ could represent a sub-optimal planner. 
For this altered \Cref{def: bounded agent}, \Cref{theorem: main} is unchanged and \Cref{alg:estimate_p} returns the agents subjective world model $\mathcal M$ with bounded error. 
This may be appealing as a representation theorem as it has much weaker assumptions than \citet{savage1972foundations}, e.g. we only assume the agent follows a policy that is imperfectly consistent with its beliefs, whereas \citet{savage1972foundations} requires the agent specifies a preference order over all actions (whereas a policy specifies only the most preferred action(s)), and makes strong rationality assumptions which are not satisfied by most current systems \citep{raman2024steer}.

\textbf{Good regulator theorem.} This influential theorem attempts to establish a similar result to ours, that any agent capable of controlling a system is in some sense a model of that system \citep{conant1970every}. 
However, what the theorem actually shows is that, under several strong assumptions, an agent that minimizes the entropy of its environment must have a deterministic policy.
This deterministic policy is then interpreted as a model of the environment, with the actions assigned to different states corresponding to a state representation. 
This is in spite of the fact that the policy (and hence the world model) could be a constant function, assigning the same action to every state (see e.g. \citep{Baezgoodregulator,Wentworth2021goodregulator} for further discussion). 
We do not consider an agent having a deterministic policy to be meaningful evidence that the agent has a model of its environment, and our theorem less ambiguously demonstrates that a world model capable of predicting the evolution of the environment has been learned by the agent.

\textbf{Theories of agency.} That agents have world models is a foundational assumption for several prominent theories in psychology and neuroscience: from constructivist theories of perception \citep{gregory1980perceptions} to active inference \citep{friston2010free} and theories of consciousness \citep{safron2020integrated}. 
Like representation theorems, these theories aim to provide explanatory models of natural agents, rather than proving that agents necessarily conform to their assumptions. 
Our results offer a strong theoretical justification for these frameworks by demonstrating that goal-directed agents must acquire world models to achieve a degree of behavioural flexibility. 
Moreover, our findings remove the need to assume agents have world models a priori. 
Instead, we can assume a level of competency which implies their existence---arguably a more defensible position as competence can be measured.

\section{Conclusion}



The idea that the microstructure of an agent reflects the macrostructure of its environment is not new. 
It can be traced as far back as Democritus, who claimed that ``man is a microcosm''---a miniature reflection of the cosmos \citep{allers1944microcosmus}---and persists in contemporary scientific thinking---for example, Friston's assertion that ``an agent does not have a model of its world---it is a model'' \citep{friston2013active}. 
While this relation between agents and environments has long been hypothesised, we have sought to formalise and prove it. 
We have shown that any agent capable of generalizing to a sufficiently wide range of simple goal-directed tasks must have learned an accurate model of it's environment. 
Essentially, all the information required to accurately simulate the environment is contained in the agent's policy. 
This implies that learning a world model is not only beneficial, but necessary for general agents.
Consequently, efforts to create truly general AI cannot sidestep the challenge of world modeling, and instead should embrace it to unlock further capabilities and address critical issues in safety and interpretability.


Future work could extend our analysis to different classes of goals beyond \Cref{def: goals}, and identify sets of simple `universal' tasks that are sufficient to imply an agent has learned a world model. 
These tasks may then be useful for training general agents, as world models can support task generalization.
Our results also point to new methods for inferring an agent's beliefs from their goals and behaviour without making strong rationality assumptions. 
Future work could build on \Cref{alg:estimate_p} to develop algorithms for recovering world models that are more scalable or apply to more general environments, and using these to improve agent safety and interpretability.  
\Cref{theorem: main} also gives theoretical support to work in mechanistic interpretability recovering implicit world models. 
A world model must be encoded in the activations of any sufficiently capable and general goal-directed agent, as the policy is a function of those activations. 
Future work could use this necessity to derive new fundamental bounds on agent capabilities from the learnability of world models.


\begin{comment}

The idea that agents carry around with them an internal representation of the world has deep roots in philosophy [...] and psychology [...].
Agents reflect their environment---they are ``a perpetual living mirror of the universe'' \citep{leibniz1989monadology}.
We have formalised this connection by showing that in order to achieve a level of goal-directedness, agents must contain a predictive model of their environment, and the accuracy of this model in turn bounds their capabilities. 
This is perhaps best summarised by Friston, who claimed  ``an agent does not have a model of it's world---it is a model'' \citep{friston2013active}. REPEATS PREVIOUS SECTION.

\end{comment}

\newpage
\onecolumn
\appendix
\section{Proof of \Cref{theorem: main}}

\subsection{Notation}
In the following we denote random variables as capital letters $X$ and lower case letters $x$ denote an event $X = x$ (equivalently, a value or state of $X$).
We use bold letters to denote sets of variables $\bm X = \{X_1, X_2, \ldots, X_m\}$ and $\bm x$ denotes a set of events $\{x_1, x_2, \ldots, x_m\}$.
We use square brackets to denote a proposition, e.g. $[X = x]$ returns True if $X = x$ and False otherwise.
$I(p)$ denotes an indicator function, which returns 1 if the proposition $p$ is True and 0 if False.

\subsection{Environments}

\cmp*

First we assume the environment is described by a finite-dimensional, communicating, controlled Markov process. 
For discussion of these standard assumptions see \citealp{puterman2014markov,sutton2018reinforcement}. 

\assumptionenv*

In the following we use $S = s_t$ and $A = a_t$ to denote the state of the environment and a the agent's choice of action at time $t$.
The sequence of successive environment states and actions are referred to as a trajectory $\tau = (s_0, a_0, s_1, a_1, \ldots )$, with $\tau$ denoting an infinite length trajectory and we introduce an index $t_{i:j} = (s_i, a_i, \ldots, s_t, a_t)$ to denote a finite length trajectory between times $i$ and $j$.
In some settings we use $h_{i:t} = (s_i, a_i, \ldots, s_t)$ to denote a finite length trajectory that should be interpreted as a history, i.e. a trajectory that has occurred, and which is truncated at $s_t$ (i.e. does not include $a_t$).

\subsection{Goals}\label{appendix: goals}

Linear Temporal Logic (LTL) \cite{pnueli1977temporal,baier2008principles} is a formalism widely used for expressing instructions, goals and safety constraints for agents \citep{littman2017environment,li2017reinforcement,hasanbeig2019reinforcement,dzifcak2009and,ding2014optimal}.
LTL extends classical propositional logic by introducing operators for reasoning about sequences of states over time, primary among them being,

\begin{itemize}
    \item $\medcirc$ (Next): The property holds in the next state,
    \item $\Diamond$ (Eventually): The property will hold at some point in the future,
    \item $\Box$ (Always): The property holds at every state from now on,
    \item $\mathcal U$ (Until): One property holds until another becomes true,
\end{itemize}

which can be combined with standard logical connectives (AND $\land$, OR $\lor$, NOT $\lnot$ and material implication $\rightarrow$) to create complex goal specifications.
The environment + agent system is described by the joint states $(s_t, a_t)$ where $s_t$ is the state of the environment and $a_t$ is the agent's action, at time $t$. 
Trajectories (paths) are a sequence of these states which we denote $\tau = (s_0, a_0, s_1, a_2, \ldots )$.  
An LTL expression $\varphi$ assigns a truth value to a given trajectory $\tau$, denoted $\tau \models \varphi$, which is true if $\tau$ satisfies $\varphi$ and false otherwise, with evaluation beginning at $t = 0$.
For example, the trajectory of the environment-agent system $\tau = (s_0 = 0, a_0 = 0, s_1 = 1, a_0 = 0, \ldots )$ satisfies $\varphi = [s = 0] \wedge \medcirc [s = 1]$ as the agent is in state $s = 0$ initially (at time $t= 0$) and in the next time step is in state $s = 1$.

Our desire is to define a minimal class of goals that describe the simplest and most intuitive goal-directed behaviours. 
To this end we focus on the most common definition of goals as being desirable states of the environment-agent system \citep{liu2022goal}, which must be achieved within some time horizon.

\ltlgoals*

Rather than considering time horizons at the level of specific time indices $t$, which would require the agent to be capable of a high degree of environment control (e.g. `reach state $S = s$ in precisely three time steps'), we focus on two simple time horizons; goals that are achieved immediately (now, $\top$), in the next time step (next, $\medcirc$), or at any time in the future (eventually, $\Diamond$). 
Note that in LTL expressions the `now' temporal operator is the identity, and we use $\top$ (True) to denote this.
As $\top ([X = x]) = [X = x]$ we suppress $\top$, e.g. $\varphi_i =[(s, a) \in \bm g_i]$, for ease of notation.

\textbf{Example:} consider the following goal for a cleaning robot: move eventually to the kitchen and in the next time step turn on the dish washer. 
This goal can be expressed as $\varphi = \Diamond ( [S = \text{in kitchen}]\wedge \medcirc [A = \text{turn on dishwasher}])$.
A trajectory $\tau$ satisfies this goal (denoted $\tau \models \varphi$) if $\exists$ $t$ s.t. $S_t = \text{in kitchen}$ and $A_{t+1} = \text{turn on dishwasher}$

Going beyond the simplest, one-step goal-directed tasks requires an agent to achieve multiple sub-goals in a particular order.
Our aim is to define a sequential goal $\psi$ in such a way that $\tau \models \psi$ is true if and only if each sub-goal state $\bm g_i$ is reached by the agent in the correct order.
Expressing these sequential goals in LTL can be cumbersome, so for notational neatness we define a sequential goal formula $\psi = \langle \varphi_1, \ldots, \varphi_L\rangle$ which stands in for the more complex LTL expression, which is given by the the recursive formula in \Cref{def: goals}. 
It will not be necessary to define sequential goals in general, as for our proofs we will focus on a simple class of sequential goals where the agent must reach a goal state either immediately or eventually.

\begin{dfn}[Sequential goals]\label{def: sequential goals}
$\psi = \langle \varphi_1, \varphi_2, \ldots, \varphi_L\rangle$ denotes the sequence of sub-goals (\Cref{def: ltl goals}) $\varphi_1, \varphi_2, \ldots, \varphi_n$, where $\varphi_i = \mathcal O_i ([(s, a) \in \bm g_i])$, $\mathcal O_i \in \{\Diamond, \top\}$ and $n = \text{depth}(\psi)$ is the goal depth.
$\psi$ can be expressed in linear temporal logic using the following recursive formula, 
\begin{equation}
\langle \varphi_1,\varphi_2,\ldots, \varphi_L \rangle = \begin{cases}
[(s, a) \in \bm g_1] \wedge  \langle \varphi_2,\ldots, \varphi_L \rangle \, , \quad \mathcal O_1 =\top \\
\medcirc \left([(s, a) \in \bm g_1] \wedge  \langle \varphi_2,\ldots, \varphi_L \rangle\right) \, , \quad \mathcal O_1 =\medcirc \\
[(s, a) \not\in \bm g_1] \mathcal U \left([(s, a) \in \bm g_1] \wedge\langle \varphi_2,\ldots, \varphi_L \rangle \right)\, , \quad \mathcal O_1 = \Diamond
\end{cases}\label{eq: goals recursive formula}
\end{equation}
where $\top = \text{True}$ and $\mathcal O_i = \top$ denotes the Now (trivial) temporal operator, and for the singleton $\langle \varphi \rangle = \varphi$. 
\end{dfn}

By applying \eqref{eq: goals recursive formula} recursively we can convert any sequential goal $\psi$ into an LTL expression. 
To understand \eqref{eq: goals recursive formula} we can consider the simple case with two sub-goals $\psi = \langle \varphi_1, \varphi_2\rangle$ and trajectory $\tau = (s_0, a_0, s_1, a_1, \ldots )$.
If $\mathcal O_1 = \top$, then $\psi$ is satisfied if $(s_0, a_0)\in \bm g_1$ and the trajectory starting from the next time step $\tau_1 = (s_1, a_1, s_2, a_2, \ldots)$ satisfies $ \varphi_2$.
For $\mathcal O_1 = \Diamond$ the LTL expression we desire is $\langle \varphi_1, \varphi_2\rangle = [(s, a)\not \in  \bm g_1] \mathcal U ([(s, a) \in \bm g_1] \wedge \varphi_2)$.
To see this, consider the case where $\mathcal O_2 = \top$, i.e. the agent's goal is to eventually reach $\bm g_1$, and then in the next time step to reach $\bm g_2$. 
If we attempt to express this goal as $\psi = \Diamond ([(s, a) \in \bm g_1] \wedge \varphi_2)$, note that $\tau \models \psi$ if $\exists$ $t$ s.t. $(s_t, a_t)\in \bm g_1$ and $(s_{t+1}, a_{t+1})\in \bm g_2$.  
This includes trajectories where the agent reaches $\bm g_1$ and then fails to transition to $\bm g_2$ in the next time step, arbitrarily many times, so long as eventually the agent achieves the desired transition.
Our aim is to express sequential goals where after satisfying a sub-goal $\varphi_i$ the agent switches to pursuing the sub-goal $\varphi_{i+1}$ in the next time step, and if the agent fails to satisfy this sub-goal then it fails to satisfy the overall sequential goal. 
The expression $[(s, a)\not \in  \bm g_1] \mathcal U ([(s, a) \in \bm g_1] \wedge \varphi_2)$ enforces the condition $[(s, a)\not \in  \bm g_1]$ (the agent is not in goal-state $\bm g_1$) until they eventually reach $g_1$ at some $t$, and their trajectory commencing $t+1$ satisfies $\varphi_2$, which captures the desired goal-switching behaviour.

\textbf{Example:} Consider the goal of transitioning eventually to $S = s$, then in the next time step transitioning to state $S = s'$ and then eventually returning to $S = s$. 
This is captured by the sequential goal $\psi = \langle \varphi_1, \varphi_2, \varphi_1\rangle$ with sub-goals $\varphi_2 = \Diamond \bm g_1$ where $\bm g_1 = \{(a, s) \, \forall \, a\in \bm A \}$ and $\varphi_1 = \bm g_2$ where $\bm g_1 = \{(a, s') \, \forall \, a\in \bm A \}$. 
Applying \eqref{eq: goals recursive formula} gives $\psi = [(s, a) \not\in \bm g_1] \mathcal U ([(s, a) \in \bm g_1] \wedge \medcirc ([(s,a)\in \bm g_2] \wedge \Diamond ([(s, a)\in \bm g_1]))$, which is satisfied by any $\tau$ s.t. i) $\exists$ $t$ s.t. $S_t = s$ and $S_{t'} \neq s$ $\forall$ $t'<t$, ii) $S_{t+1} = s'$ and ii) $\exists$ $t' > t+1$ s.t. $S_{t'} = s$.

Finally, we consider the case where there are multiple sequential goals the agent could satisfy, each corresponding to a different course of action that would be sufficient to achieve an overall goal. 
For example, a doctor's goal of providing primary care to a patient can be satisfied by several mutually exclusive pathways, such as providing a primary diagnosis and prescription, referring to a specialist for diagnosis, and so on. Each of these is its own task described by a sequence of sub-goals (e.g. attempting a primary diagnosis may involve question asking, performing an examination, etc), the outcome of which can inform the path the doctor takes (e.g. if an examination is inconclusive, they may refer to a specialist). 
Each of these pathways therefore corresponds to a different sequential goal, and satisfying any of these sequential goals satisfies the overall goal of providing care to the patient. 

To formalise this we consider goals that are disjunctions over multiple sequential goals. Let $\bm \Psi$ denote the set of all \textit{composite} goals, which includes all disjunctions over all sequential goals (\Cref{def: sequential goals}), i.e. $\psi, \psi'\in \bm \Psi \implies \psi \vee \psi'\in \bm \Psi$. 
For a conjunction over goals $\psi'' = \psi \vee \psi'$, then agent satisfies $\psi''$ if its policy generates a trajectory $\tau = (s_0, a_1, s_1, a_2, \ldots )$ that satisfies $\psi$ or $\psi'$. 

\composite*

\textbf{Example:} Consider the simple navigation task where a robot cleaner is required to clean the kitchen and the living room in any order, and then return to its charging station. 
There are two pathways that satisfy this; 1) clean the kitchen (eventually), then clean the living room (eventually), then return to the charging point (eventually), 2) the same as 1) but with the kitchen and living room swapped. 
Formally, the robot satisfies the overall goal if it generates a trajectory $\tau$ that satisfies the LTL expression $\psi = \psi_1 \vee \psi_2$ where $\psi_1 = \langle \varphi_1, \varphi_2, \varphi_3\rangle$, $\psi_1 = \langle \varphi_2, \varphi_1, \varphi_3\rangle$, $\varphi_1 = \Diamond ([(s, a) \in \bm g_1 = \{(s_1, a_1)\}])$, $\varphi_2 = \Diamond ([(s, a) \in \bm g_2 = \{(s_2, a_1)\}])$, $\varphi_3 = \Diamond ([s \in \bm g_3 = \{s_3\}])$, $s_1 = \text{in kitchen}$, $s_2 = \text{in livingroom}$, $s_3 = \text{at charging station}$ and $a_1 = \text{clean}$. 

\subsection{Agents}

We assume the environment is described by a cMDP (\Cref{def: cmp}). 
Due to the generally non-Markovian nature of sequential goals, we consider the most general definition of agents as maps from histories and goals to actions.
A goal conditioned agent is a policy $\pi (a_t \mid h_t ; \psi)$, where $\psi$ is a (composite) goal.
For simplicity we restrict our attention to agents that follow deterministic policies.
In general the environment may evolve non-deterministically, so the objective is to maximise the probability that $\tau \models \vee  \psi$, which is determined by summing over the probabilities of all trajectories that could result from $\pi$ and that satisfy $\psi$ \citep{qiu2024instructing}.

\optimalagent*

$P(\tau \models \psi \mid \pi, s_0)$ is the probability that the trajectory $\tau$ generated by the agent under policy $\pi$ satisfies the composite goal $\psi$ (LTL expression is given by \Cref{def: goals}), 
\begin{equation}
    P(\tau \models \psi \mid \pi, s_0) = \sum_\tau P(\tau \mid \pi, s_0) I([\tau \models \psi])
\end{equation}

In other words, an optimal goal-conditioned agent can achieve any composite goal $\psi \in \bm \Psi$ with the maximum probability of success attainable for every initial state $S_0 = s_0$ that the agent could start in.

It is of course unreasonable to assume that any realistic agent is capable of optimally satisfying any given composite goal $\psi$ in its environment, and so we consider two relaxations of \Cref{def: optimal ltl agent}; sub-optimal agents, and restricted the complexity of the goals the agent is capable of achieving. 

Firstly, the most intuitive way to bound the agent's optimality carries over from regret bounds in reinforcement learning \citep{sutton2018reinforcement}, but instead of providing a lower bound on the cumulative discounted reward compared to the optimal agent, we can lower bound on the probability that the agent achieves a given goal compared to the optimal agent.
Secondly, achieving goals that involve a larger number of sub-goals (a higher goal depth $n$, \Cref{def: goals}) is more difficult than achieving short-term or myopic goals, and intuitively requires more knowledge of the environment. 
For example, if we restricted to one-step goals ($\mathcal O_i = \top$ and $n = 1$), simply knowing $\argmax_a P_{ss'}(a)$ would be sufficient to identify an optimal policy, thus a full world model capable of simulating the environment is clearly not required. 
On the other hand, if an agent uses a world model to plan, effectively planning for longer sequences of sub-goals requires an increasingly accurate model, as errors compound over time. 
Hence, in deriving our results it is natural to consider agents with some bounded maximum goal depth $n$, such that there is no guarantee that agent can satisfy the regret bound for sequential goals with depth greater than $n$. 

To this end, we propose the following definition of a bounded goal-conditioned agent defined by two parameters; $\delta$ (the lower bound on the probability of achieving a goal compared to an optimal agent), and $n$ (the maximum goal depth for which the $\delta$ bound applies). 

\boundedagent*

\subsection{Overview of proof of \Cref{theorem: main}}\label{appendix: proof overview}

At a high level, the proof of \Cref{theorem: main} can be understood as deriving an algorithm that estimates $P_{ss'}(a)$ by querying the bounded agent's policy $\pi(a_t \mid h_t, \psi)$ with different composite goals and observing how the agent's action choice changes.
We consider composite goals where the agent is required to navigate (eventually) to a specific state $S = s$ and take an action $A = a$, transitioning to an outcome state, and then returns eventually to $S = s$ (\Cref{fig:transition_2}). 
We compare two goals, the first $\psi_1(r,n)$ which is satisfied if the outcome state is $S = s$ at most $r$ times, out of a total of $n$ trials (taking action $A = a$ in $S = s$), and the second $\psi_2(r,n)$ where the outcome is $S = s$ at least $r+1$ times. 
\begin{figure}[h!]
    \centering
    \includegraphics[scale = 0.15]{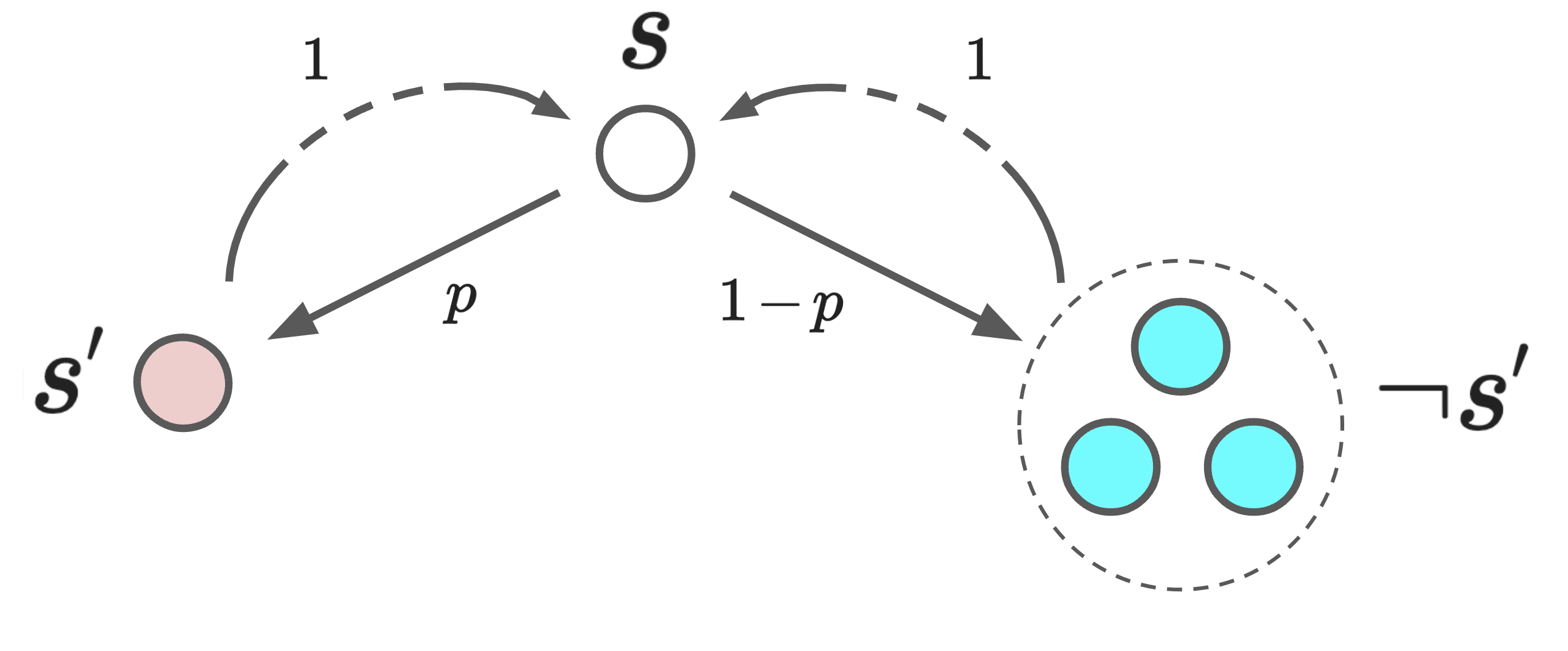}
    \caption{Figure illustrates the composite goal in the proof of \Cref{theorem: main}. 
    }
    \label{fig:transition_2}
\end{figure}
An optimal agent can achieve the first goal with a probability given by the cumulative binomial distribution $P_b(X \leq r)$ where $X$ is the total number of `successful' transitions $(a, s) \rightarrow s'$, which occur with probability $P_{ss'}(a)$, and likewise the second goal can be achieved with probability $P_b(X > r)$. 
Hence, as we increase $r$ from $0$ to $n$, an optimal agent will switch from pursuing the second goal to pursuing the first goal when $r$ reaches a value that exceeds the median number of successes, and we show it is possible to identify this `goal switching' in the agent's policy $\pi (a_t \mid h_t, \psi_1(r,n) \vee \psi_2(r,n))$. 
The median is given by $\lfloor P_{ss'}(a)(n+1)\rfloor$ and so we can bound $P_{ss'}(a)$ with an error that scales as $1/n$. 
For $\delta > 0$, the goal-switching behaviour of the agent cannot precisely determine the median, but bounds it within a region, and this allows us to approximate $P_{ss'}(a)$ with an error that depends on $\delta$.

\subsection{Proof of \Cref{theorem: main}}\label{appendix: main proof}

We now prove our main results. 

\begin{lemma}\label{lemma: state tree}
For a finite dimensional, stationary and communicating cMPD (\Cref{assumption: environment}) there exists a deterministic Markovian policy $\pi_{s'}(a \mid h = (s_0, a_0, \ldots, s_T)) = \pi_{s'}(a \mid s_T)$ that eventually reaches a given state $S = s'$ from any other state $S = s$ with probability 1. 
\end{lemma}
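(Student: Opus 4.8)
The plan is to treat ``reach $s'$'' as a hitting-time problem and to build an explicit deterministic stationary policy from shortest-path distances on the reachability graph of the cMP. First I would define a directed graph $G$ on the state space $\bm S$ that places an edge $s \to s''$ whenever there is an action $a \in \bm A$ with $P_{s s''}(a) > 0$. By \Cref{assumption: environment} (irreducibility and finiteness) $G$ is strongly connected, so for the fixed target $s'$ the shortest-path distance $d(s)$ from $s$ to $s'$ in $G$ is finite for every $s$, with $d(s') = 0$; set $D = \max_{s} d(s) < \infty$. Note we only care about the first time the chain visits $s'$, so $s'$ need not be absorbing.

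Next I would construct the policy. For each $s \neq s'$ with $d(s) = k > 0$, pick a neighbour $s''$ one step closer, i.e. with an edge $s \to s''$ and $d(s'') = k - 1$; such a neighbour always exists as the successor of $s$ on a shortest path to $s'$, by the standard unweighted-graph identity $d(s) = 1 + \min_{s \to s''} d(s'')$. Let $\pi_{s'}(s)$ be a single action $a$ witnessing $P_{s s''}(a) > 0$ (and define $\pi_{s'}(s')$ arbitrarily). This $\pi_{s'}$ is deterministic and depends only on the current state, hence Markovian, as the statement requires.

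The key quantitative step is to lower-bound the probability of making progress. Let $p_{\min}$ be the minimum over all $s \neq s'$ of $P_{s s''}(\pi_{s'}(s))$, where $s''$ is the chosen closer neighbour; since $\bm S$ is finite and each such probability is strictly positive, $p_{\min} > 0$. From any state $s$, following $\pi_{s'}$ and landing on the intended closer neighbour at every step reaches $s'$ in exactly $d(s) \leq D$ steps, an event of probability at least $p_{\min}^{d(s)} \geq p_{\min}^{D} =: q > 0$. Hence from every starting state the probability of hitting $s'$ within $D$ steps is at least $q$.

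Finally I would close the argument by iteration using the Markov property. Writing $T = \inf\{t : S_t = s'\}$, the uniform bound gives $P(T > D \mid S_0 = s) \leq 1 - q$; conditioning on the state at time $D$ (which is not $s'$ on the event $\{T > D\}$) and applying stationarity repeatedly yields $P(T > kD \mid S_0 = s) \leq (1-q)^k$. Letting $k \to \infty$ gives $P(T < \infty \mid S_0 = s) = 1$ for every $s$, which is the claim. The only real subtlety, rather than an obstacle, is justifying the existence of a strictly closer neighbour at each state and the positivity of $p_{\min}$; both follow cleanly from finiteness plus irreducibility, and the remainder is the standard ``geometric trials'' hitting-time estimate.
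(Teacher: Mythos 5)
Your proposal is correct and follows essentially the same approach as the paper: the paper builds a BFS-style tree rooted at $s'$ and assigns each state an action moving toward its child, which is exactly your shortest-path policy, and both arguments conclude with the same repeated-attempts/geometric-trials estimate (yours stated slightly more quantitatively via the uniform bound $(1-q)^k$). The only omission is that the paper additionally treats the case of returning to the starting state $s' = s$, which the lemma as quoted does not strictly require.
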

\begin{proof}
For the cMP to be communicating, it must be that for any $s'\neq s$ there exists a finite sequence of actions that reaches any $S = s'$ from any $S = s$ with non-zero probability.
Therefore, for any $S = s'$ we can construct a tree of states by; i) starting with the root $s'$ and defining the set $\bm Z = \bm S \setminus \{s'\}$, ii) for each $s''\in \bm Z$, if $\exists$ $A = a''$ s.t. $P_{s'' s'}(a'')>0$ then $s''$ is a parent of $s'$ in the tree and we remove $s''$ from $\bm Z$, iii) repeat for all parents of $s'$ and so on, until $\bm Z = \emptyset$. 
As the cMDP is finite dimensional and communicating, the resulting tree traverses the state space and is of finite depth, and by construction every state in the tree $s_i$ has a single child $s_j$ and the tree contains no loops. 
For each $s_i$ we can associate an action $a (s_i)$ given by $a (s_i) = \argmax_a P_{s_i s_j}(a)$. 
Consider the Markovian policy $\pi (A = a \mid h = (s_0, a_0, \ldots, s_T) = [a = a(s_T)]$, which attempts to move from the most recent state $s_T$ to $s'$ by traversing the tree. 
For every state, there is a non-zero probability that $\pi$ succeeds in traversing the tree to the root $S=s'$.
If the agent fails a given transition $S_t = s_i\rightarrow S_{t+1} = s_j$, the process of traversing the tree begins again from $S_{t+1}$, and as the policy and environment are Markovian, each attempt to traverse to $S = s'$ is independent.
Hence, $\pi$ attempts to reach $S = s'$ with an unbounded number of independent trials, each with non-zero probability of success, and hence eventually reaches $S = s'$ with probability 1. 

For $s' = s$, the problem is identical except that the deterministic policy can take any action in $S = s$.
Let $S = s_1$ be the state that this transitions to. 
If $s_1 = s$ then the policy has reached $S = s$. 
If $s_1 \neq s$ we follow the deterministic Markovian policy derived in the previous section which eventually reaches $S = s$ with probability 1. 
\end{proof}

The following lemma allows us to simplify our analysis by letting us consider optimal policies in environments with extended action spaces, where determining optimal policies is easier.

\begin{lemma}\label{lemma: extended environment}
Consider extending the action space of the environment cMDP with a single action $A = \bar a$, $\bm A' = \bm A \cup \{\bar a\}$ where the extended transition function $P'$ has $P'_{ss'}(a) = P_{ss'}(a)$ $\forall$ $a\neq \bar a$, and $P'_{ss'}(\bar a)$ is any valid conditional probability distribution. 
For any given composite goal $\psi$ the optimal policy for the extended action space $\bm A'$ achieves $\psi$ with a probability greater or equal to that for the optimal policy in the unextended action space $\bm A$, 
\[
\max_\pi P'(\tau \models \psi \mid \pi, s_0) \geq \max_\pi P(\tau \models \psi \mid \pi, s_0)
\]
\end{lemma}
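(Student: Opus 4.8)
The plan is to prove this by a straightforward enlarged-feasible-set (monotonicity of the maximum) argument: extending the action space can only add options, never remove them, so the optimal achievable probability cannot decrease. The key observation is that $\bm A \subseteq \bm A'$, and therefore every (deterministic) policy over the original action space $\bm A$ is also a valid policy over the extended action space $\bm A'$ --- specifically, one that simply never selects the new action $\bar a$.

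First I would make precise the embedding of policies. Let $\Pi$ denote the set of deterministic policies $\pi : h_t \mapsto a_t \in \bm A$ and $\Pi'$ the analogous set with codomain $\bm A'$. Since $\bm A \subseteq \bm A'$, we have $\Pi \subseteq \Pi'$: any $\pi \in \Pi$ is a policy in the extended environment that happens to assign zero probability to $\bar a$ at every history. The second step is to show that for any such policy $\pi \in \Pi$, the induced distribution over trajectories is identical under $P$ and $P'$. This follows because $\pi$ only ever selects actions in $\bm A$, and by hypothesis $P'_{ss'}(a) = P_{ss'}(a)$ for all $a \neq \bar a$; hence every transition probability encountered along any realizable trajectory agrees between the two environments. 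Consequently $P'(\tau \mid \pi, s_0) = P(\tau \mid \pi, s_0)$ for all $\tau$, and since the LTL satisfaction indicator $I([\tau \models \psi])$ depends only on the trajectory, summing gives
\[
P'(\tau \models \psi \mid \pi, s_0) = P(\tau \models \psi \mid \pi, s_0) \qquad \forall\, \pi \in \Pi.
\]

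The final step combines these observations. Let $\pi^\star \in \Pi$ attain $\max_\pi P(\tau \models \psi \mid \pi, s_0)$ in the unextended environment (such a maximizer exists since the state and action spaces are finite). Then $\pi^\star \in \Pi'$ as well, so
\[
\max_{\pi \in \Pi'} P'(\tau \models \psi \mid \pi, s_0) \;\geq\; P'(\tau \models \psi \mid \pi^\star, s_0) \;=\; P(\tau \models \psi \mid \pi^\star, s_0) \;=\; \max_{\pi \in \Pi} P(\tau \models \psi \mid \pi, s_0),
\]
which is exactly the claimed inequality.

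Honestly, there is no serious obstacle here --- the content is entirely the inclusion $\bm A \subseteq \bm A'$ plus agreement of the transition kernels on $\bm A$. The only points requiring mild care are bookkeeping ones: ensuring the argument is stated for deterministic policies as the paper assumes (the subset relation $\Pi \subseteq \Pi'$ still holds), confirming that the goal evaluation $\tau \models \psi$ is unaffected because the mimicking policy never produces $\bar a$ in any trajectory (so the goal states in $\bm S \times \bm A$ are scored identically), and noting that since the goals in $\bm \Psi$ can be non-Markovian, it is cleanest to quantify over history-dependent policies throughout rather than Markovian ones. None of these introduce real difficulty, so I expect the proof to be short.
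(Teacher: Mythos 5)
Your proposal is correct and follows essentially the same argument as the paper: embed the optimal policy for $\bm A$ into the extended environment, observe that it never selects $\bar a$ so its trajectory distribution (and hence its goal-satisfaction probability) is unchanged, and conclude by monotonicity of the maximum over the enlarged policy set. The paper's proof is just a terser version of your write-up.
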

\begin{proof}
Let $P'_{ss'}(a)$ denote the new transition function in the extended environment.
As $P'_{ss'}(a) = P_{ss'}(a)$ $\forall$ $a\neq \bar a$, the probability of any $\pi$ that does not take action $A = \bar a$ satisfying any given composite goal $\psi$ is the same in the extended and unextended environments. 
As the optimal policy $\pi^*_{\bm A}$ over $\bm A$ does not take action $A = \bar a$ (as $\bar a\not\in \bm A$), then 
\begin{equation}
    P'(\tau \models \psi \mid \pi^*_{\bm A}, s_0)  = P(\tau \models \psi \mid \pi^*_{\bm A}, s_0)
\end{equation}
Therefore there exists a policy for the extended action space (namely $\pi^*_{\bm A}$) that achieves $\psi$ with the same probability as the optimal policy for the unextended action space. 
Therefore,
\begin{equation}
    \max_\pi P'(\tau \models \psi \mid \pi, s_0) \geq \max_\pi P(\tau \models  \psi \mid \pi, s_0)
\end{equation}
\end{proof}

We now derive Lemmas that let us factor and simplify sequential goals. 

\begin{lemma}\label{lemma: factor eventually}
For $\psi = \langle \varphi_1, \ldots, \varphi_L\rangle$ and a cMP obeying \Cref{assumption: environment}, if $\varphi_1 = \Diamond ([S = s_g, A = a_g])$ and $\pi(a_t \mid h_t) = \pi(a_t \mid s_t)$ is a stationary Markovian policy that eventually reaches $S = s_g$ and takes $A = a_g$ from $S_0 = s_0$ with probability 1, then,
\[
P(\tau \models \langle \varphi_1, \varphi_2,\ldots, \varphi_L\rangle \mid \pi, s_0) = P(\tau \models \langle \varphi_2,\ldots, \varphi_L\rangle \mid \pi, s_g)    
\]
\end{lemma}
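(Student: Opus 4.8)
The plan is to reduce the event $\{\tau \models \langle \varphi_1,\ldots,\varphi_L\rangle\}$ to a statement about the trajectory suffix following the first visit to $(s_g,a_g)$, and then to re-anchor that suffix at $s_g$ using the strong Markov property. The crux is that the ``$\Diamond$'' sub-goal $\varphi_1$, combined with the probability-1 reaching guarantee on $\pi$, lets us strip $\varphi_1$ off for free.

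First I would unfold $\varphi_1$ via the recursive LTL formula of \Cref{def: sequential goals}. Since $\varphi_1 = \Diamond([(s,a)\in \bm g_1])$ with $\bm g_1 = \{(s_g,a_g)\}$, the recursion yields
\[
\langle \varphi_1,\ldots,\varphi_L\rangle = [(s,a)\not\in \bm g_1]\,\mathcal U\,\big([(s,a)\in \bm g_1]\wedge \langle \varphi_2,\ldots,\varphi_L\rangle\big).
\]
By the semantics of $\mathcal U$, a trajectory $\tau$ satisfies this iff there is a time $t$ at which $(s_t,a_t)=(s_g,a_g)$ and the suffix of $\tau$ from $t$ satisfies $\langle \varphi_2,\ldots,\varphi_L\rangle$, while $(s_{t'},a_{t'})\neq (s_g,a_g)$ for all $t'<t$. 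The side condition forces $t$ to equal the first hitting time $T := \min\{t : (s_t,a_t)=(s_g,a_g)\}$. Because $\pi$ is deterministic and stationary Markovian it takes $\pi(s_g)=a_g$ whenever it is in $s_g$, so $T$ coincides with the first hitting time of the \emph{state} $s_g$. I would emphasise here that the conjunction places $\langle \varphi_2,\ldots,\varphi_L\rangle$ at the same time index $T$ at which $\bm g_1$ is reached (not $T+1$), so the remaining goal is evaluated on the suffix $\tau_T = (s_T,a_T,s_{T+1},\ldots)=(s_g,a_g,s_{T+1},\ldots)$, an event measurable with respect to $\tau_T$.

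Second I would invoke the hypothesis that $\pi$ reaches $(s_g,a_g)$ from $s_0$ with probability $1$, giving $T<\infty$ almost surely. Consequently, up to a null set, $\{\tau\models \langle \varphi_1,\ldots,\varphi_L\rangle\}$ is exactly the event $\{\tau_T \models \langle \varphi_2,\ldots,\varphi_L\rangle\}$, so that $P(\tau\models \langle \varphi_1,\ldots,\varphi_L\rangle \mid \pi, s_0) = P(\tau_T\models \langle \varphi_2,\ldots,\varphi_L\rangle \mid \pi, s_0)$. Finally I would apply the strong Markov property: $T$ is a stopping time, and on $\{T<\infty\}$ we have $S_T=s_g$, $A_T=a_g$ deterministically; since both the transition function $P_{ss'}(a)$ and the policy $\pi(a\mid s)$ are stationary and Markovian, the conditional law of $\tau_T$ given $S_T=s_g$ coincides with the law of a fresh trajectory generated by $\pi$ with initial state $s_g$ (whose first action is again $a_g=\pi(s_g)$). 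Hence $P(\tau_T\models \langle \varphi_2,\ldots,\varphi_L\rangle\mid \pi,s_0)=P(\tau\models \langle \varphi_2,\ldots,\varphi_L\rangle\mid \pi,s_g)$, which is the claim.

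The main obstacle is the bookkeeping in the first step rather than any hard probability: one must verify that the $\mathcal U$-unfolding correctly identifies $T$ as the \emph{first} hitting time without excluding any satisfying trajectory, and that the LTL truth of $\langle\varphi_2,\ldots,\varphi_L\rangle$ genuinely depends only on the suffix $\tau_T$ (shift-invariance of evaluation). Once these are pinned down, the reaching guarantee collapses the $T<\infty$ contribution and the strong Markov property handles the re-anchoring essentially mechanically.
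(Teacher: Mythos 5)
Your proposal is correct and follows essentially the same route as the paper's proof: unfold $\varphi_1$ into the $\mathcal{U}$-formula, identify the witnessing time as the (a.s.\ finite) first hitting time of $(s_g,a_g)$, and re-anchor the suffix at $s_g$ using Markovianity of the policy and environment --- the paper carries out your "strong Markov property" step by explicitly summing over histories $h_T$ and using $\sum_{h_T}P(h_T\mid\pi,s_0)=1$. Your handling of the edge case $s_0=s_g$ (via $T=0$) and of why the deterministic policy makes the hitting time of $(s_g,a_g)$ coincide with that of $s_g$ is consistent with the paper's explicit case split.
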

\begin{proof}
Using \Cref{def: goals} we can simplify $\langle \varphi_1, \varphi_2, \ldots, \varphi_L\rangle = [S\neq s_g] \mathcal U ([S = s_g] \wedge \langle\varphi_2, \ldots, \varphi_L \rangle)$. 
If $s_0 = s_g$ then $\varphi_1$ is satisfied at $t = 0$ and $\varphi_1$ is trivialised, i.e. $P(\tau \models \langle \varphi_1, \varphi_2,\ldots, \varphi_L\rangle \mid \pi, s_g) = P(\tau \models \langle \varphi_2,\ldots, \varphi_L\rangle \mid \pi, s_g)$. 
Therefore we need only consider the case where $s_0 \neq s_g$.

As $\pi$ reaches $S = s_g$ from $S_0 = s_0$ with probability 1, every trajectory generated by $\pi$ eventually reaches $S = s_g$ by assumption, and at some $T>0$ as $s_0 \neq s_g$. For a given $\tau$ let $T$ be the time step that $\tau$ first reaches $s_g$. 
Because $\pi$ is deterministic and Markovian, and the environment is Markovian, then for $s_0 \neq s_g$ we can express, 
\begin{equation}
    P(\tau \mid s_0, \pi) = \prod\limits_{i=1}^T P(s_i \mid s_{i-1}, \pi'(s_{i-1}))P(\tau_{T+1}\mid S_T = s_g, \pi'(S_t = s_g))
\end{equation}
where $\pi'$ is the policy for $t > 0$, and as $\pi'$ is stationary we have that $P(\tau_{T+1}\mid S_T = s_g, \pi'(S_t = s_g)) = P(\tau_{T+1}\mid s_g, \pi'(s_g))$. 
Let $h_T$ be the trajectory up to $S_T = s_g$. 
Using the LTL expression for $\psi$ from \Cref{def: sequential goals} gives,
\begin{align}
    P(\tau \models \langle \varphi_1, \varphi_2,\ldots, \varphi_L\rangle \mid \pi, s_0) &= \sum\limits_{h_T}  P(h_T \mid s_0, \pi)\sum\limits_{\tau_{T+1}} P(\tau_{T+1}\mid h_T, \pi(h_T))I([\tau \models [S\neq s_g] \mathcal U ([S = s_g]\wedge \langle\varphi_2, \ldots, \varphi_L \rangle)])\\ \\
    &= \sum\limits_{h_T}  P(h_T \mid s_0, \pi)I([S\neq s_g] \mathcal U [S = s_g])\sum\limits_{\tau_{T+1}} P(\tau_{T+1}\mid h_T, \pi(h_T))I([\tau_T \models \langle\varphi_2, \ldots, \varphi_L \rangle])\\ 
    &= \sum\limits_{h_T}  P(h_T \mid s_0, \pi)\sum\limits_{\tau_{T+1}} P(\tau_{T+1}\mid s_g, \pi(s_g))I([\tau_T \models \langle\varphi_2, \ldots, \varphi_L \rangle])\\
    &=  \sum\limits_{h_T}  P(h_T \mid s_0, \pi) P(\tau_{T+1}\models \langle\varphi_2, \ldots, \varphi_L \rangle \mid S_T = s_g, A_t = \pi(s_g))\\
    &= P(\tau \models \langle\varphi_2, \ldots, \varphi_L \rangle \mid  s_g, \pi))
\end{align}
where in the last line we have used $\sum_{h_T} P(h_T \mid \pi, s_0) = 1$ by assumption (as $\pi$ reaches $S = s_g$ from $S_0 = s_0$ with probability 1). 
\end{proof}

\begin{lemma}\label{lemma: factor next and eventually}
For $\psi = \langle \varphi_1, \varphi_2,\varphi_3, \ldots, \varphi_L\rangle$ and a cMP obeying \Cref{assumption: environment}, if $\varphi_1 = \medcirc ([s\in \bm g_1])$ and $\varphi_2 = \Diamond ([S =s_g, A = a_g])$ and $\pi$ is a deterministic, Markovian policy then
\[
P(\tau \models \psi  \mid s_0, \pi) = P(S_1 \in \bm g_1 \mid s_0, \pi) P(\tau \models \langle \varphi_3 , \ldots, \varphi_L\rangle \mid s_g, \pi)
\]
\end{lemma}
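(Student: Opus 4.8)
The plan is to peel off the two leading sub-goals in turn, using the recursive LTL formula of \Cref{def: sequential goals} to dispatch the Next operator and then invoking \Cref{lemma: factor eventually} to dispatch the Eventually operator. First I would expand $\varphi_1 = \medcirc([s \in \bm g_1])$ via the $\mathcal O_1 = \medcirc$ case of \eqref{eq: goals recursive formula}, rewriting the goal as $\psi = \medcirc\left([s \in \bm g_1] \wedge \langle \varphi_2, \ldots, \varphi_L\rangle\right)$. The Next operator shifts the point of evaluation to $t = 1$, so $\tau \models \psi$ holds exactly when $s_1 \in \bm g_1$ and the shifted trajectory $\tau_1 = (s_1, a_1, s_2, \ldots)$ satisfies $\langle \varphi_2, \ldots, \varphi_L\rangle$.

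Next I would condition on the first transition. Because $\pi$ is deterministic and Markovian and the environment is Markovian, the trajectory probability factorises across the first step, giving
\[
P(\tau \models \psi \mid s_0, \pi) = \sum_{s_1} P(s_1 \mid s_0, \pi(s_0))\, I(s_1 \in \bm g_1)\, P\!\left(\tau \models \langle \varphi_2, \ldots, \varphi_L\rangle \mid s_1, \pi\right).
\]
The crucial observation is that the continuation factor does not depend on which $s_1 \in \bm g_1$ is reached. This is precisely what \Cref{lemma: factor eventually} delivers: treating $\varphi_2 = \Diamond([S = s_g, A = a_g])$ as the leading Eventually sub-goal and using that $\pi$ reaches $(s_g, a_g)$ with probability $1$, the Eventually operator collapses the dependence on the start state, so that $P(\tau \models \langle \varphi_2, \ldots, \varphi_L\rangle \mid s_1, \pi) = P(\tau \models \langle \varphi_3, \ldots, \varphi_L\rangle \mid s_g, \pi)$ for every such $s_1$.

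Having established this independence, I would pull the continuation factor outside the sum and identify the remaining sum as $\sum_{s_1} P(s_1 \mid s_0, \pi(s_0))\, I(s_1 \in \bm g_1) = P(S_1 \in \bm g_1 \mid s_0, \pi)$, which yields the claimed factorisation directly.

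The main obstacle is justifying the application of \Cref{lemma: factor eventually} uniformly over all reachable $s_1 \in \bm g_1$, not merely at the nominal start state: that lemma requires $\pi$ to reach $(s_g, a_g)$ with probability $1$, so I would need to make explicit (or inherit from the surrounding construction, cf.\ \Cref{lemma: state tree}) that $\pi$ is a navigating policy possessing this reachability property from every relevant state. The Markov assumption does the real work here, guaranteeing that the continuation from $s_1$ depends on $s_1$ alone and not on the discarded prefix $(s_0, a_0)$, which is what makes each per-state continuation probability well-defined and permits \Cref{lemma: factor eventually} to be applied state by state.
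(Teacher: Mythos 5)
Your proposal follows essentially the same route as the paper's proof: expand the Next operator via the recursive LTL formula of \Cref{def: sequential goals}, factor the trajectory probability over the first transition using the Markov property of the policy and environment, apply \Cref{lemma: factor eventually} to collapse the continuation to a quantity independent of the particular $s_1 \in \bm g_1$ reached, and pull that factor out of the sum to leave $P(S_1 \in \bm g_1 \mid s_0, \pi)$. You also correctly flag the implicit hypothesis that $\pi$ must reach $(s_g, a_g)$ with probability $1$ from every reachable $s_1$ in order to invoke \Cref{lemma: factor eventually}; the paper's own proof relies on this in exactly the same way without stating it in the lemma's hypotheses.
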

\begin{proof}
Using \Cref{def: sequential goals} and $\tau_k = (s_k, a_k, \ldots)$ we get, 
\begin{align}
    P(\tau \models \psi  \mid s_0, \pi) &= \sum\limits_{\tau_1}P(\tau_1 \mid s_0, a_0 = \pi(s_0))I([\medcirc ([s\in \bm g_1]\wedge \langle\varphi_2, \ldots, \varphi_L \rangle)])\\
    &= \sum\limits_{\tau_1}P(\tau_1 \mid s_0, a_0 = \pi(s_0))I([s_1\in \bm g_1]\wedge[\tau_1\models \langle\varphi_2, \ldots, \varphi_L \rangle])\\
    &= \sum\limits_{s_1}P(s_1 \mid s_0, a_0 = \pi(s_0))I([s_1 \in \bm g_1])\sum\limits_{\tau_2}P(\tau_2 \mid s_0, a_0 = \pi(s_0), s_1, a_1 = \pi(s_1))I([\tau_1\models \langle\varphi_2, \ldots, \varphi_L \rangle])\\
    &= \sum\limits_{s_1}P(s_1 \mid s_0, a_0 = \pi(s_0))I([s_1 \in \bm g_1])\sum\limits_{\tau_2}P(\tau_2 \mid s_1, a_1 = \pi(s_1))I([\tau_1\models \langle\varphi_2, \ldots, \varphi_L \rangle])\\
    &= \sum\limits_{s_1}P(s_1 \mid s_0, a_0 = \pi(s_0))I([s_1 \in \bm g_1])P(\tau_2 \models \langle \varphi_2, \ldots, \varphi_L\rangle\mid s_1, a_1 = \pi(s_1))\\
    &= \sum\limits_{s_1}P(s_1 \mid s_0, a_0 = \pi(s_0))I([s_1 \in \bm g_1])P(\tau\models \langle \varphi_3, \ldots, \varphi_L\rangle\mid s_g, \pi)\label{eq: apply lem}\\
    &= P(S_1 \in \bm g_1 \mid s_0, \pi) P(\tau \models \langle \varphi_3 , \ldots, \varphi_L\rangle \mid s_g, \pi)
\end{align}
where in line \eqref{eq: apply lem} we apply \Cref{lemma: factor eventually}

\end{proof}

\begin{lemma}\label{lemma: factor now}
For $\psi = \langle \varphi_1, \varphi_2, \ldots, \varphi_L\rangle$ and a cMP obeying \Cref{assumption: environment}, if $\varphi_1 = \top ([A = a])$ and $\pi$ is a deterministic policy s.t. $\pi(s_0) = a$ then $P(\tau \models \psi  \mid s_0, \pi) = P(\tau \models \langle \varphi_2 , \ldots, \varphi_L\rangle \mid s_0, \pi)$
\end{lemma}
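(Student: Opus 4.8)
The plan is to reduce the ``Now'' case to a pure indicator computation, exploiting that the $\top$ operator does not advance the evaluation time and that $\pi$ is deterministic at $s_0$. First I would expand $\psi$ with the recursive formula of \Cref{def: sequential goals} for the branch $\mathcal O_1 = \top$, giving $\psi = [(s,a) \in \bm g_1] \wedge \langle \varphi_2, \ldots, \varphi_L \rangle$, where for $\varphi_1 = \top([A=a])$ the goal slice is $\bm g_1 = \{(s', a) : s' \in \bm S\}$ (every state paired with the action $a$). The crucial structural observation is that, unlike the $\medcirc$ case in \Cref{lemma: factor next and eventually}, the conjunction here is evaluated at the same time step $t = 0$ at which evaluation of $\psi$ begins; no time index is shifted, so the tail goal $\langle \varphi_2, \ldots, \varphi_L\rangle$ is still read off the trajectory $\tau = (s_0, a_0, s_1, \ldots)$ starting from $t = 0$.

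Next I would write $P(\tau \models \psi \mid s_0, \pi) = \sum_\tau P(\tau \mid s_0, \pi)\, I([\tau \models \psi])$ and factor the indicator along the conjunction, $I([\tau \models \psi]) = I([(s_0,a_0) \in \bm g_1])\, I([\tau \models \langle \varphi_2, \ldots, \varphi_L\rangle])$. Since $[(s_0, a_0) \in \bm g_1]$ is exactly the proposition $[a_0 = a]$, and since $\pi$ is deterministic with $\pi(s_0) = a$, every trajectory in the support of $P(\tau \mid s_0, \pi)$ has $a_0 = a$; hence $I([a_0 = a]) = 1$ wherever the weight $P(\tau \mid s_0, \pi)$ is nonzero. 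The first indicator therefore drops out of the sum, leaving $\sum_\tau P(\tau \mid s_0, \pi)\, I([\tau \models \langle \varphi_2, \ldots, \varphi_L\rangle]) = P(\tau \models \langle \varphi_2, \ldots, \varphi_L\rangle \mid s_0, \pi)$, which is the claim.

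The only real subtlety --- and the thing I must get right --- is the ``no time shift'' point: I need to justify that $\langle \varphi_2, \ldots, \varphi_L\rangle$ is evaluated from $t=0$ rather than $t=1$, which follows directly from the $\mathcal O_1 = \top$ branch of \Cref{def: sequential goals} together with the semantics of $\wedge$ (both conjuncts share the current time origin). This is precisely what makes the lemma easier than \Cref{lemma: factor eventually} and \Cref{lemma: factor next and eventually}: there is no need to sum over histories, to invoke the probability-$1$ reachability of \Cref{lemma: state tree}, or to use stationarity and the Markov property to re-anchor the tail probability at a new state. Determinism of $\pi$ at the single state $s_0$ supplies everything, so I expect the argument to be short, with all the care concentrated in the correct reading of the LTL semantics of the Now operator.
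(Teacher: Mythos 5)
Your proposal is correct and follows essentially the same route as the paper's proof: expand $\psi$ via the $\mathcal O_1 = \top$ branch of \Cref{def: sequential goals}, factor the indicator into $I([a_0 = a])\,I([\tau \models \langle \varphi_2,\ldots,\varphi_L\rangle])$, and observe that determinism with $\pi(s_0)=a$ makes the first factor identically $1$ on the support of $P(\tau \mid s_0, \pi)$, with no time shift for the tail. Your explicit attention to the "no time shift" semantics of $\top$ is a point the paper leaves implicit, but the argument is the same.
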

\begin{proof}
This follows simply from \Cref{def: sequential goals} and that the policy is deterministic, 
\begin{align}
    P(\tau \models \psi  \mid s_0, \pi) &= \sum\limits_{\tau_1}P(\tau_1 \mid s_0, a_0 = \pi(s_0)) I([\tau \models [A_0 = a]\wedge \langle \varphi_2, \ldots, \varphi_N\rangle)\\
    &= \sum\limits_{\tau_1}P(\tau_1 \mid s_0, a_0 = a= \pi(s_0))I([\pi(s_0) = a]) I([\tau \models \langle \varphi_2, \ldots, \varphi_N\rangle)\\
    &= P(\tau \models \langle \varphi_2, \ldots, \varphi_L\rangle \mid s_0, \pi)
\end{align}
where $\tau_k = (s_k, a_k, \ldots)$.
\end{proof}

We now derive a family of composite goals for which the optimal policy satisfies the goal with a probability given by the cumulative binomial distribution for which the probability parameters is a specific transition probability.

\begin{lemma}\label{lemma: main lemma}
Let $\psi(r, n)$ be the composite goal which is the disjunction over all sequential goals of the form 
\[
\psi = \langle \varphi_1, \underbrace{\varphi_2,\varphi_3, \varphi_2,
 \varphi_3,\ldots \varphi_2, \varphi_3'}_{n \text{ times}}\rangle 
\]
where the agent
\begin{itemize}
    \item[i)] takes action $A = b$, $\varphi_1 = [A = b]$, and then transitions eventually to $S = s$ and takes action $A = a$, $\varphi_2 = \Diamond ([S = s, A = a])$,
    \item[ii)] transitions next to a goal state which is either $S = s'$, $\varphi_3 = \medcirc[S = s']$, or $S\neq s'$, $\varphi_3' = \medcirc[S \neq s']$,
    \item[iii)] returns eventually to $S = s$ and takes action $A = a$, $\varphi_2$, and repeats the cycle ii)-iii) a total of $n$ times, with the transition $\varphi_3 = [S'= s]$ occurring $r$ times and the transition $\varphi'_3 = [S \neq s']$ occurring $n-r$ times. 
\end{itemize}

For $s \neq s'$, the optimal policy achieves this goal with probability,
\begin{equation}
    \max\limits_\pi P(\tau \models \psi(r, n) \mid \pi, s_0) = \frac{n!}{(n-r)!r!}P_{ss'}(a)^r (1-P_{ss'}(a))^{n-r}\label{eq: target equation}
\end{equation}
\end{lemma}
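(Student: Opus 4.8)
The plan is to recognise the composite goal $\psi(r,n)$ as encoding ``exactly $r$ successes in $n$ independent trials,'' where each trial consists of reaching $(S,A)=(s,a)$ and recording whether the induced transition lands in $s'$. Writing $\psi(r,n)=\bigvee_\sigma \psi_\sigma$ as the disjunction over the $\binom{n}{r}$ orderings $\sigma$ of $r$ successes ($\varphi_3=\medcirc[S=s']$) and $n-r$ failures ($\varphi_3'=\medcirc[S\neq s']$), I would first argue that for a deterministic policy generating well-defined trial points the events $\{\tau\models\psi_\sigma\}$ are mutually exclusive, since each trajectory produces a single outcome sequence in $\{s',\neg s'\}^n$. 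Hence $P(\tau\models\psi(r,n)\mid\pi,s_0)=\sum_\sigma P(\tau\models\psi_\sigma\mid\pi,s_0)$, and it suffices to evaluate one ordering and check that every ordering with $r$ successes yields the same value $P_{ss'}(a)^r(1-P_{ss'}(a))^{n-r}$.

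To evaluate a fixed $\psi_\sigma$ I would use \Cref{lemma: state tree} to build the canonical ``navigate-and-trial'' policy $\pi$: take $b$ at $s_0$, then repeatedly steer to $s$ with probability $1$, take $a$, and after observing the outcome steer back to $s$. I would then peel the goal apart term by term. First \Cref{lemma: factor now} removes the leading $\varphi_1=[A=b]$ (as $\pi(s_0)=b$). Then I alternate \Cref{lemma: factor eventually}, which strips a leading $\Diamond([S=s,A=a])$ and resets the start state to $s$, and \Cref{lemma: factor next and eventually}, which consumes an outcome $\varphi_3^{(i)}=\medcirc[S\in\bm g^{(i)}]$ together with the following navigation sub-goal $\Diamond([S=s,A=a])$, extracting the factor $P(S_1\in\bm g^{(i)}\mid s,\pi)$ and again resetting to $s$. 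Each such factor equals $P_{ss'}(a)$ for a success and $1-P_{ss'}(a)$ for a failure, so by induction over the $n$ trials $P(\tau\models\psi_\sigma\mid\pi,s_0)=P_{ss'}(a)^r(1-P_{ss'}(a))^{n-r}$; summing over orderings gives the binomial expression. The ``reaches with probability $1$'' hypotheses of the factoring lemmas are exactly what \Cref{lemma: state tree} supplies, using irreducibility and $s\neq s'$.

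It remains to show this constructed value is the \emph{maximum} over all policies. The construction already gives ``$\geq$''. For ``$\leq$'' I would invoke \Cref{lemma: extended environment}: pass to the environment augmented with a navigation action $\bar a$ (e.g.\ $P'_{s''s}(\bar a)=1$ for all $s''$) so that the optimal policy becomes transparent --- navigation can be made deterministic and every trial completed --- while \Cref{lemma: extended environment} guarantees the optimum can only increase. The key structural fact is that the goal \emph{forces} action $a$ at $s$ at each trial, so by the Markov property each trial outcome is an independent $\mathrm{Bernoulli}(P_{ss'}(a))$ draw that no policy can bias; consequently, even in the extended environment the number of successes in $n$ completed trials is $\mathrm{Binomial}(n,P_{ss'}(a))$, and completing all $n$ trials (probability at most $1$) caps the value at $\binom{n}{r}P_{ss'}(a)^r(1-P_{ss'}(a))^{n-r}$. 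Chaining the inequalities pins the original optimum to this value.

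I expect the main obstacle to be the optimality (upper-bound) step, together with making the ``trials are forced, independent coins'' intuition rigorous: one must verify, via the $\mathcal U$-semantics of \Cref{def: sequential goals}, that the trial points are well-defined and non-overlapping for an arbitrary (possibly history-dependent, randomised) policy, and apply the strong Markov property at each trial so that the intervening history --- navigation, and any stalling at $s$ under an action other than $a$ --- cannot shift the per-trial success probability away from $P_{ss'}(a)$. The bookkeeping that each application of \Cref{lemma: factor next and eventually} correctly pairs an outcome $\varphi_3^{(i)}$ with the subsequent navigation sub-goal, and that the final trial is closed off by a bare $\medcirc$ term, is routine but must be stated carefully.
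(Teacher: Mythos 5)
Your proposal is correct and follows essentially the same route as the paper: mutual exclusivity of the orderings, the extended-environment action $\bar a$ from \Cref{lemma: extended environment} to make the optimum transparent and obtain the upper bound, the navigation policy from \Cref{lemma: state tree} to saturate it in the original environment, and the factoring lemmas (\Cref{lemma: factor now}, \Cref{lemma: factor eventually}, \Cref{lemma: factor next and eventually}) to peel the sequential goal into per-trial factors of $P_{ss'}(a)$ or $1-P_{ss'}(a)$. The only difference is presentational (you establish the achievability direction first, the paper the upper bound first), and the subtleties you flag, such as stalling at $s$ under actions other than $a$, are handled in the paper at the same level of informality.
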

\begin{proof}
Let $\psi = \bigvee_i \psi_i$. Each $\psi_i$ involves a specific ordering of the $r$ sub-goals $\varphi_3 = [S = s]$ and the $n-r$ sub-goals $\varphi_3' = [S \neq s']$, hence they are mutually exclusive, i.e. $\nexists \tau$ such that $[\tau \models \psi_i]\wedge[\tau \models \psi_j]$ for any $\psi_i, \psi_j$ in the disjunction s.t. $\psi_i \neq \psi_j$, and hence,
    \begin{equation}
        P(\tau \models \psi(r, n) \mid \pi , s_0) = \sum_{i} P(\tau \models \psi_i \mid \pi, s_0)\label{eq: mutually exclusive}
    \end{equation}
First we evaluate $\max_\pi P(\tau \models \psi(r, n) \mid \pi, s_0)$ in the environment with the extended action space (\Cref{lemma: extended environment}) with $\bm A' = \bm A \cup \{\bar a\}$, $P'_{s'' s}(\bar a) = 1$ $\forall$ $s''\in \bm S$ and $P'_{ss'}(a\neq \bar a) = P_{ss'}(a)$. 
I.e. we extend with the action $A = \bar a$ which returns the agent to $S = s$ from any state with probability 1. 
Note that until the agent has returned to $S = s$ a total of $n$ times, in order to satisfy any sequential goal $\psi_i$ comprising the composite goal $\psi(r, n)$ the agent must take action $A = b$ at $t = 0$ and $A = a$ when it is in $S = s$. 
The only freedom left to the agent is how it returns to $S = s$ (to satisfy $\varphi_2$) from whatever state it transitions to after taking $A = a$ in $S = s$, and for the extended action space it can achieve this immediately with probability $1$ simply by taking action $A = \bar a$. 
Therefore, the following policy is optimal for satisfying $\psi(n, r)$ with the extended action space,
\begin{equation}
    \bar \pi^*(A = a'\mid h = (s_0, a_0, \ldots, s_t)) = \begin{cases}
    I([a' = b]) \, , \quad t = 0\\
    I([a' = a]\wedge [s_t = s]) + I([a' = \bar a]\wedge [s_t \neq s])\, , \quad t > 0
    \end{cases}\label{eq: pi star}
\end{equation}
i.e. the agent first takes action $A = b$ (required to satisfy $\varphi_1$), and from then on it takes action $A = a$ in $S = s$ (required for $\varphi_3$ and $\varphi_3'$) and $A = \bar a$ otherwise, which returns the agent immediately to $S = s$.
Applying \Cref{lemma: factor now} and \Cref{lemma: factor eventually} allows us to eliminate the first $\varphi_1$ and $\varphi_2$, giving 
\begin{equation}\label{eq: after first elimination}
    P'(\tau \models \psi_i \mid \bar \pi^*,s_0) = P'(\tau \models \langle  \underbrace{\varphi_2, \varphi_3, \ldots , \varphi_2,\varphi_3'}_{r \times \varphi_2, \varphi_3 \text{ and } (n-r)\times \varphi_2, \varphi_3' } \rangle \mid \bar{\pi}^{*\prime},s)
\end{equation}
where $\bar{\pi}^{*\prime}$ is $\bar{\pi}^*$ for $t>0$, which we denote $\bar{\pi}^*$ from now on for ease of notation, and can treat $\bar{\pi}*$ as a stationary policy. 
Repeatedly applying \Cref{lemma: factor next and eventually} to \eqref{eq: after first elimination} gives,
\begin{equation}
    P'(\tau \models \psi_i \mid \bar \pi^*, s_0) = P_{ss'}(a)^r(1-P_{ss'}(a))^{n-r}
\end{equation}
Applying \eqref{eq: mutually exclusive} and noting $ P'(\tau \models \psi_i \mid \bar \pi^*, s_0) =  P'(\tau \models \psi_j \mid \bar \pi^*, s_0)$ for all $i, j$ for $\psi(r, n) = \bigvee_i \psi_i$, and the total number of sequential goals comprising $\psi(r, n)$ is given by the number of combinations of size $r$ from $n$ objects ($n$ transitions, $r$ of which are $s\rightarrow s'$), and so we recover, 
\begin{equation}
    \max\limits_\pi P'(\tau\models \psi(r, n) \mid \pi, s_0) = \frac{n!}{(n-r)!r!}P_{ss'}(a)^r (1-P_{ss'}(a))^{n-r}\label{eq: optimal upper bound}
\end{equation}
    
Finally, we construct a policy $\tilde \pi$ in the original (unextended) environment, and show that this saturating the upper bound in \eqref{eq: optimal upper bound} and so by \Cref{lemma: extended environment} is optimal, therefore \cref{eq: target equation} holds.
By \Cref{lemma: state tree} there exists a deterministic Markovian policy $\pi_{s'}(a \mid s)$ that transitions eventually to $S = s$ from any state with probability 1.
Let,
\begin{equation}
    \tilde \pi (a_t \mid s_t) = \begin{cases}
    I([a'=b]) \, , \quad t = 0 \\
    \pi_{s'}(a\mid s_t) \, , \quad t >0\text{ and } s_t\neq s \\
    I([a' = a]) \,, \quad t >0 \text{ and } s_t = s
    \end{cases}\label{eq: policy}
\end{equation}
Note $\tilde \pi$ is identical to $\bar \pi^*$ except that instead of taking action $A = \bar a$ in $S = s$ (as this action does not exist) the agent follows $\pi_{s'}$. 
As $\pi_{s'}$ is deterministic, stationary and Markovian, and eventually reaches $S = s$ with probability 1 from any $S = s'$, so we can apply \Cref{lemma: factor now} and \Cref{lemma: factor next and eventually} as before giving, 
\begin{equation}
    P(\tau \models \psi_i \mid \tilde \pi, s_0) = P_{ss'}(a)^r(1-P_{ss'}(a))^{n-r}
\end{equation}
which saturates the upper bound implied by \eqref{eq: optimal upper bound} and \Cref{lemma: extended environment}, hence $\tilde \pi$ is optimal, and using $nCr = n!/((n-r)!r!)$ we recover \eqref{eq: target equation}.
\end{proof}

We are now in a position to prove our main theorem. 

\maintheorem*

\begin{proof}
Let $\psi_{a'}(k, n)$ denote composite goal as in \Cref{lemma: main lemma} which is a disjunction over all sequential goals of the form 
\begin{equation}
    \psi = \langle \varphi_0, \underbrace{\varphi_1, \varphi_2,\ldots \varphi_1, \varphi_2'}_{n \text{ times}}\rangle 
\end{equation}
where the agent
\begin{itemize}
    \item[i)] takes action $A = a$ ($\varphi_0 = [A = a]$), and then transitions eventually to $S = s$ and takes action $A= a$ ($\varphi_1 = \Diamond ([S = s, A = a])$),
    \item[ii)] transitions next to a goal state which is either $S = s'$ ($\varphi_2 = \medcirc [S = s']$) or $S\neq s'$ ($\varphi_2' = \medcirc[S \neq s']$),
    \item[iii)] returns eventually to $S = s$ and takes action $A = a$ ($\varphi_1$), and repeats the cycle ii)-iii) a total of $n$ times, with the transition $\varphi_2 = \medcirc[S'= s]$ occurring $r$ times and the transition $\varphi'_2 = \medcirc[S \neq s']$ occurring $n-r$ times, for all $r \leq k$. 
\end{itemize}
I.e. the agent's goal is to first take action $A = a$ and then to achieve the transition $(a, s) \rightarrow s'$ at most $k$ times out of $n$ attempts. 
Note that $n$ attempts corresponds to a goal depth of $2n + 1$.

Consider the sequential goals $\psi_b(k, n)$ that is identical to $ \psi_a(k, n)$ except that the first sub-goal i) takes action $A = b$ instead of $A = a$ at time $t = 0$, and in iii) we have $r > k$ instead of $r\leq k$. 
I.e. the agents goal is to first take action $A = b$ and then to achieve the transition $(a, s) \rightarrow s'$ more than $k$ times out of $n$ attempts. 
\item Consider the composite goal $\psi_{a,b} (k, n) = \psi_a (k, n) \vee \psi_b(k, n)$ for any pair of action $a, b$ such that $a \neq b$ (we assume there are at least two distinct action in \Cref{assumption: environment}).
\item Note that $ \psi_a(k, n)$ and $\psi_b(k, n)$ are mutually exclusive, $\tau \models \psi_a(k, n) \implies \tau \not\models \psi_b(k, n)$ and vice versa, hence, 
\begin{equation}
    P(\tau \models \psi_{a,b} (k, n) \mid \pi, s_0) = P(\tau \models  \psi_a(k, n) \mid \pi, s_0) + P(\tau \models  \psi_b(k, n) \mid \pi, s_0)
\end{equation}
and for any $\pi$ only one of the terms on the right hand side is non-zero. 
Hence we can evaluate $\max_\pi P(\tau \models \psi_a(k, n) \mid \pi, s_0)$ and $\max_\pi P(\tau \models \psi_b(k, n) \mid \pi, s_0)$ separately.

Consider a bounded goal-conditioned agent (\Cref{def: bounded agent}). 
As the policy is deterministic by assumption, the agent can only choose one of two sub-goals to attempt to satisfy, $\psi_a(k,n)$ or $\psi_b(k, n)$, depending on its first action choice $A_0$.
If $\pi(a_0 \mid s_0) = I([a_0 = a])$ then the agent is pursuing $ \psi_a (k, n)$. 
For $\psi_a (k, n) = \bigvee_i \psi_i$ all $\psi_i, \psi_j$ are mutually exclusive for $\psi_i \neq \psi_j$, $\tau \models \psi_i \implies \tau \not\models \psi_j$ and vice versa, and hence, 
\begin{equation}
    P(\tau \models \psi_a(k, n) \mid \pi, s_0) = \sum\limits_i P(\tau \models \psi_i \mid \pi, s_0)
\end{equation}
and by \Cref{lemma: main lemma} the maximum probability that this goal can be satisfied is given by,
\begin{equation}
    \max\limits_\pi P(\tau \models  \psi_a(k,n) \mid \pi, s_0) = \sum\limits_{r = 0}^k P_n(X = r) = P_n(X \leq k)
\end{equation}
where
\begin{equation}
    P_n(X = r) := \frac{n!}{(n-r)!r!}P_{ss'}(a)^r (1-P_{ss'}(a))^{n-r}
\end{equation}
is the binomial probability mass function and $P_n(X \leq k)$ is the cumulative distribution function. 
\item Likewise if $\pi(a_0 \mid s_0) = I([a_0 = b])$ the agent is pursuing $\bm \psi_b (k, n)$, which can be achieved with a maximum probability,
\begin{equation}
    \max\limits_\pi P(\tau \models \psi_b(k,n) \mid \pi, s_0) = \sum\limits_{r = k+1}^n P_n(X = k) = P_n(X > k)
\end{equation}
Finally if $\pi(a_0 \mid s_0) = I([a_0 = a'])$ where $a'\not\in \{a, b\}$ then the agent satisfies $\psi_{a,b} (k, n)$ with probability zero. 

By assumption the agent's policy is deterministic, and $\max \{P_n(X \leq k), P_n(X > k)\}> 0$, so for any $n, k$ the agent must take action $A = a$ or $A = b$ at $t = 0$.
Therefore for any given $n, k$ the agent's policy $\pi(a_0 \mid s_0 ;  \psi_{a,b} (k, n))$ selects either $a_0 = a$ or $a_0 = b$, and the choice of $A_0$ for a given $k$ witnesses the following inequalities; 
\begin{align}
    \pi(a_0 \mid s_0 ; \psi_{a,b} (k, n)) &= I([a_0 = a]) \, \implies \,P_n(X \leq k) \geq P_n(X > k)(1-\delta)\label{eq: ineq 1}\\
    \pi(a_0 \mid s_0; \psi_{a,b} (k, n)) &= I([a_0 = b]) \, \implies \, P_n(X > k) \geq P_n(X \leq k)(1-\delta)\label{eq: ineq 2}
\end{align}

For ease of notation we denote $P_{ss'}(a) = p$. 
The median of the binomial distribution $X = m$ is an integer $0 \leq m \leq n$ that satisfies $np-1 \leq m \leq np + 1$. 
The proof proceeds by incrementing $k$ from $0$ to $n$, increasing $P_n(X \leq k)$ while decreasing $P_n(X > k)$, and finding the smallest value $k^*$ such that $P_n(X> k^*-1)\geq P_n(X\leq k^*-1)(1-\delta)$ and $P_n(X\leq k^*)\geq P_n(X> k^*)(1-\delta)$. 
If the agent always chooses $A_0 = a$ we set $k^* = 0$, and if they always choose $A_0 = b$ we set $k^* = n$.
This will turn out to be equivalent to a \textit{sparsity bias} in the procedure for estimating $P_{ss'}(a)$, as it will result in us treating any transition probability below a given threshold value as $0$, or above a maximum value as $1$.  
Note that $P_n(X>0) = 1, P_n(X\leq 0) = 0$, and $P_n(X>n) = 0, P_n(X\leq n) = 1$, so for any $\delta < 1$ there must exist $0 \leq k^* \leq n$ satisfying \eqref{eq: ineq 1} and \eqref{eq: ineq 2}.
Using $P_n(X > k) = 1 - P_n(X \leq k)$, $P_n(X > k-1) = P_n(X \geq k)$ and $P_n(X \leq k - 1) = 1 - P_n(X\geq k)$ these inequalities simplify to,
\begin{align}
    P_n(X\leq k^*) &\geq \frac{1-\delta}{2-\delta}\label{eq: bound down}\\
    P_n(X\geq k^*) &\geq \frac{1-\delta}{2-\delta}\label{eq: bound up}
\end{align}
Note that the median $m$ satisfies $P_n(X\geq m) \geq 1/2$ and $P_n(X\leq m) \geq 1/2$, so for $\delta = 0$ we will recover the median exactly, and $\delta > 0$ constitutes a relaxation of the bounds on the median, which we will show results in $k^*$ that has a bounded distance from the mean $n p$. 

We derive two bounds on $k^*$, first in the case where $\delta$ is small and $n$ is large. 
To derive this first bound we use Berry-Esseen theorem, which allows us to bound the distance of the (normalised) cumulative binomial distribution from the cumulative normal distribution,
\begin{equation}
    \left|P_n\left(\frac{X - n p}{\sqrt{n p(1-p)}} \leq k\right) - \Phi (X \leq k)  \right|\leq \Delta
\end{equation}
where $\Phi$ is the cumulative normal distribution and $\Delta = C \rho / \sqrt{n}$ where $C$ is a constant satisfying $C\leq 0.4748$ and $\rho = (1- 2p(1-p))/\sqrt{p(1-p)}$.
For simplicity we relax this upper bound by taking,
\begin{equation}
    \Delta = \frac{1}{2 \sqrt{n p (1-p)}}
\end{equation}
which is larger than $C\rho / \sqrt{n}$. 
Defining $Y := (X - np)/\sqrt{np (1-p)}$, and using $P_n(Y \geq k) = 1 - P(Y \leq k - 1)$, \eqref{eq: bound up} and \eqref{eq: bound down} become, 
\begin{align}
     \Phi \left(Y \leq \frac{k^* - n p}{\sqrt{n p (1-p)}}\right) &\geq \frac{1-\delta}{2-\delta} - \Delta \\
     \Phi \left(Y \leq \frac{k^* - n p - 1}{\sqrt{n p (1-p)}}\right) &\leq \frac{1}{2-\delta} + \Delta\\
\end{align}
which can be rearranged to give, 
\begin{align}
    \frac{k ^*- n p}{\sqrt{n p (1-p)}} &\geq \Phi^{-1}\left( \frac{1-\delta}{2-\delta} - \Delta\right) \label{eq: bound up normal} \\
    \frac{k^* - n p - 1}{\sqrt{n p (1-p)}} &\leq \Phi^{-1}\left(\frac{1}{2-\delta} + \Delta \right)\label{eq: bound down normal} \\
\end{align}

For $\delta\ll 1$ and $\Delta\ll 1$ we can approximate the right hand side of \eqref{eq: bound up normal} and \eqref{eq: bound down normal} using the Taylor expansion of $\Phi^{-1}(y)$ at $y = 1/2$, 
\begin{equation}
    \Phi^{-1}(Y = \frac{1}{2} + \epsilon) =  \epsilon\sqrt{2\pi} +\mathcal O(\epsilon^3)\, , \quad \epsilon \ll 1
\end{equation}
which is a valid approximation when $\epsilon^2 \ll 1$. 
We therefore recover the bounds, 
\begin{align}
    k^* - \frac{1}{2}- n p &\gtrsim -\sqrt{2 \pi n p (1-p)}\left(\frac{\delta}{4} + \Delta\right)- \frac{1}{2}\\
    k^* - \frac{1}{2} - np  &\lesssim \sqrt{2 \pi n p (1-p)}\left(\frac{\delta}{4} + \Delta\right) + \frac{1}{2}
\end{align}
\item Using $\hat p=(k^* - 1/2)/n$ as our estimate of $p$ therefore satisfies
\begin{align*}
    \left|\hat p - p \right|&\lesssim \sqrt{\frac{2 \pi p (1-p)}{n}}\left(\frac{\delta}{4} + \Delta\right) + \frac{1}{2 n}\\
    &= \delta \sqrt{\frac{\pi p (1-p)}{8 n}} + \frac{1}{n}\left(\frac{1}{2} + \sqrt{2 \pi} \right)
\end{align*}
which is valid for $\delta^2 \ll 1$ and $\Delta^2 = 1/(n p (1-p))\ll 1$ i.e. $np(1-p) \gg 1$.
We have therefore shown that in this regime the approximation errors scales as $\mathcal O(\delta / \sqrt{n}) + \mathcal O (1 / n)$.

Finally, we derive an absolute error bound for the estimate $\hat p$ that is valid for all values of $p, n, \delta$. 
I.e. for $\delta \not\ll 1$ and/or $n p(1-p) \not \gg 1$, $k^*$ can be relatively far from the median, and so we require a bound that is satisfied for the tails of the cumulative binomial distribution. 
To this end we apply the one-sided Chebyshev inequality, 
\begin{equation}\label{eq: chebyshev}
    P_n(X \geq \mu + t\sigma ) \leq \frac{1}{1 + t^2}
\end{equation}
where $\mu = n p$ and $\sigma = \sqrt{n p (1-p)}$.
Changing variables $t = (k^* - np)/\sigma$ in \eqref{eq: chebyshev} yields, 
\begin{equation}
    P(X \geq k^*) \leq \frac{1}{1 + \frac{(k^* - np)^2}{n p (1-p)}}
\end{equation}
which combined with \eqref{eq: bound up} yields, 
\begin{equation}
    \left|k^* - np \right|\leq\sqrt{\frac{np (1-p)}{1 - \delta}}
\end{equation}
Using $\hat p := k^* / n$ as our estimate of the transition probability gives, 
\begin{equation}
    \left|\hat p - p \right|\leq\sqrt{\frac{p (1-p)}{n(1 - \delta)}}
\end{equation}

Finally, as attempting the transition $n$ times corresponds to a goal depth of $2 n + 1$, we arrive at our final expression.

\end{proof}

Note that the bound in \Cref{theorem: main} asserts that we can identify $p\in \{0, 1\}$ with perfect precision. 
While this appears surprising at first, note that the sparsity constraint we previously enforced when selecting $k^*$ means that we estimate all sufficiently small $p$ as $\hat P_{ss'}(a) = 0$ and similar for $\hat P_{ss'}(a) = 1$, hence for deterministic transition probabilities we do recover the exact value. 
This is also intuitive from the definition of the bounded goal-conditioned agent \Cref{def: bounded agent}, as for any $\delta < 1$ the agent will never choose sub-goal $\psi_a$ if $P_{ss'}(a) = 0$, and hence any such transition will always be assigned $k^* = 0$ which yields an estimate $\hat P_{ss'}(a) = 0$.

\section{Proof of \Cref{theorem: second}}\label{appendix: second theorem}

\secondtheorem*

\begin{proof}
We will prove this by contradiction, determining partial information of the environment transition function that is sufficient to construct an optimal myopic agent, and showing that this partial information is insufficient to bound the transition probabilities (the trivial bound is tight). 
Hence, there can exist no procedure that bounds the transition probabilities given this partial information, and so no procedure that does so given the optimal myopic policy.

Any $\psi \in \bm \Psi_\text{myopic}$ is of the form $\varphi_1 \vee\varphi_1\vee \ldots \vee \varphi_k$ where $\varphi_i = \medcirc ([(s, a) \in \bm g_i)$. 
Using the transitivity of the Next operator, this can be simplified to $\psi = \medcirc ([(s, a) \in \bm g_1] \vee \ldots \vee [(s, a)\in \bm g_k])$, and using $\bm y = \bm g_1 \cup \bm g_2 \cup \ldots \cup \bm g_k$ we get $\psi = [(s_1, a_1) \in \bm y]$ where $\bm Y\subseteq \bm S$ is some arbitrary subset of $\bm S$. 
For $A_0 = a$ the probability that $\psi$ is satisfied is given by $P(\tau \models \psi \mid \pi, s_0) = P(s_1 \in \bm y \mid a, s_0)$. 
The optimal agent therefore returns an action
\begin{equation}
    \pi^*(a_0 \mid s_0; \psi) = \argmax\limits_a P(s_1 \in \bm y \mid a, s_0)
\end{equation}
Let $a^*(s_0, \bm y) :=  \argmax\limits_a P(s_1 \in \bm y \mid a, s_0)$. 
We can construct an optimal policy $\pi^*(a_0 \mid s_0; \psi)$ given $\bm A^* = \{a^*(s_0, \bm y) \,|\, s_0, \bm Y\subseteq \bm S, \bm Y = \bm y \}$ as $\pi^*(a_0 \mid s_0; \psi) = I([a_0 = a^*(s_0, \bm y)])$.
Next, we show that the set of transition functions that are compatible with any given $\bm A^*$ includes all values of $P_{ss'}(a) \in [0,1]$ for any given transition, and so $\bm A^*$ does not partially identify $P_{ss'}(a)$. 
This can be seen simply by choosing $P_{ss'}(a) = P_{ss'}$ (i.e. the transition probabilities are the same for all actions).
For any choice of $P_{ss'}\in [0,1]$, such a transition function is compatible with all possible $\bm A^*$. 
Hence, for any given $\bm A^*$ the set of compatible values of $P_{ss'}(a)$ is $[0,1]$, and knowing $\bm A^*$ provides no bound on the possible values of any given transition probability $P_{ss'}(a)$ (i.e. partial identification is impossible \citep{bellot2023towards}). 
Hence, as $\pi^*$ is a function of $\bm A^*$, $\pi^*$ can provide no non-trivial bound on $P_{ss'}(a)$. 

\end{proof}

\newpage
\section{Algorithms}\label{appendix: algorithms}

First we present the pseudocode for the procedure \Cref{alg:estimate_p} used in the proof of \Cref{theorem: main} to derive error-bounded estimates of the transition probabilities $\hat P_{ss'}(a)$ given the regret-bounded goal-conditioned policy $\pi (a_t \mid h_t ; \psi)$. 
We then present \Cref{alg:estimate_p_simple} an alternative algorithm for estimating $\hat P_{ss'}(a)$ which has weaker errors bounds than \Cref{alg:estimate_p} but significantly simplified implementation. 
Note that in both \Cref{alg:estimate_p} and \Cref{alg:estimate_p_simple} we employ a linear search of $k$, but we can greatly reduce the complexity in practice e.g. by performing a binary search over $k\in [0, n]$.
We use \Cref{alg:estimate_p_simple} to generate our experimental results in \Cref{section: experiments} and \Cref{appendix: experiments}. 
Note that by looping over all transitions $(s, a, s')$ and applying \Cref{alg:estimate_p} we can recover the full transition function.

\begin{algorithm}
\caption{Estimate Transition Probability $\hat{P}_{ss'}(a)$ from Policy $\pi$}
\label{alg:estimate_p}
\begin{algorithmic}[1]
\Require Goal-conditioned policy $\pi(a_t | h_t; \psi)$
\Require Choice of state $s$, action $a$, outcome $s'$
\Require Precision parameter $n \in \mathbb{N}$ (related to maximum goal depth $2n+1$)
\Require An alternative action $b \neq a$

\Function{EstimateTransitionProbability}{$\pi, s, a, s', n, b$}
    \State Initialize $k^* \gets n$
    \For{$k = 1$ to $n$}
        \State Define base LTL components:
        \State \quad $\varphi_0 \gets [A_0=a]$ \LComment{\quad\quad Take action $a$}
        \State \quad $\varphi'_0 \gets [A_0=b]$ \LComment{\quad\quad Take action $b$}
        \State \quad $\varphi_1 \gets \Diamond [A=a, S = s]$ \LComment{\quad\quad Transitions eventually to state $s$ and takes action $a$}
        \State \quad $\varphi_2 \gets \medcirc [S=s']$ \LComment{\quad\quad Transition Next to state $s'$}
        \State \quad $\varphi'_2 \gets \medcirc [S\neq s']$ \LComment{\quad\quad Transition Next to any state other than $s'$}

        \State Define composite goal: 
        \State $\psi_0 \gets \langle\varphi_1, \varphi_2'\rangle$ \LComment{\quad\quad Sequential goal labelled Fail}
        \State $\psi_1 \gets \langle\varphi_1, \varphi_2\rangle$ \LComment{\quad\quad Sequential goal labelled Success}
        \State $\psi_a(k,n) \gets \bigvee_{\text{sequences with } r \le k \text{ successes}} \langle \varphi_0, (\psi_0 \text{ or } \psi_1)_{\times n} \rangle$
        \State $\psi_b(k,n) \gets \bigvee_{\text{sequences with } r > k \text{ successes}} \langle \varphi'_0, (\psi_0 \text{ or } \psi_1)_{\times n} \rangle$ \LComment{\quad\quad LTL expressions calculated with \Cref{def: sequential goals}}

        \State $\psi_{a,b}(k,n) \gets \psi_a(k,n) \vee \psi_b(k,n)$
        \State $a_0 \gets \pi(a_0 | s_0; \psi_{a,b}(k,n))$ \LComment{\quad\quad Query the policy for the first action}
        \If{$a_0 = a$} 
            \State $k^* \gets k$
            \State \textbf{break} \LComment{\quad\quad Found smallest $k$ s.t. where agent prefers goal involving $\le k$ successes}
        \EndIf
    \EndFor

    \State Estimate $\hat{P}_{ss'}(a) \gets (k^*-1/2)/n$
    \State \Return $\hat{P}_{ss'}(a)$
\EndFunction
\end{algorithmic}
\end{algorithm}

\Cref{alg:estimate_p_simple} requires the agent to generalize to simpler sequential goals than \Cref{alg:estimate_p}.

\begin{algorithm}
\caption{Simplified method for estimating Transition Probability $\hat{P}_{ss'}(a)$ from Policy $\pi$ with weaker error bounds than \Cref{alg:estimate_p}}
\label{alg:estimate_p_simple}
\begin{algorithmic}[1]
\Require Goal-conditioned policy $\pi(a_t | h_t; \psi)$
\Require Choice of state $s$, action $a$, outcome $s'$
\Require Precision parameter $n \in \mathbb{N}$ (related to maximum goal depth $2n+1$)
\Require An alternative action $b \neq a$

\Function{EstimateTransitionProbability}{$\pi, s, a, s', n, b$}
    \State Define base LTL components:
    \State $\varphi_0 \gets [A_0=a]$ \LComment{\quad\quad Take action $a$}
    \State $\varphi'_0 \gets [A_0=b]$ \LComment{\quad\quad Take action $b$}
    \State $\varphi_1 \gets \Diamond [A=a, S = s]$ \LComment{\quad\quad Transitions eventually to state $s$ and takes action $a$}
    \State $\varphi_2 \gets \medcirc [S=s']$ \LComment{\quad\quad Transition Next to state $s'$}
    \State $\varphi'_2 \gets \medcirc [S\neq s']$ \LComment{\quad\quad Transition Next to any state other than $s'$}

    \State Define sequential goals: 
    \State $\psi_a \gets \langle\varphi_0, \varphi_1, \varphi_2\rangle$
    \State $\psi_b \gets \langle\varphi_0, \varphi_1, \varphi_2'\rangle$
    \State $\psi_{a,b} = \psi_a \vee \psi_b$
    \State $a_0 \gets \pi(a_0 | s_0; \psi_{a,b})$ \LComment{\quad\quad Query the policy for the first action}
    
    \If{$a_0 = a$} \LComment{\quad\quad Witnessing $P_{ss'}(a) \geq (1 - P_{ss'}(a))(1-\delta)$}
        \State $\psi_a \gets  \langle \varphi_0, (\psi_1 , \psi_2)_{\times n} \rangle$
        \State $\psi_b(k) \gets  \langle \varphi_0, (\psi_1 , \psi_2')_{\times k} \rangle$
        \For{$k = 1$ to $n$}
            \State $\psi_{a,b}(k) \gets \psi_a \vee \psi_b(k)$
            \State $a_0 \gets \pi(a_0 | s_0; \psi_{a,b}(k))$ \LComment{\quad\quad Query the policy for the first action}
            \If{$a_0 = a$}
                \State $k^* \gets k$
                \State \textbf{break}
            \EndIf
        \EndFor
        \State Estimate $\hat{P}_{ss'}(a) \gets \text{Solve}(P^n = (1-P)^{k^* - 1/2})$
        \State \Return $\hat{P}_{ss'}(a)$
        
    \Else
        \State $\psi_b \gets  \langle \varphi_0, (\psi_1 , \psi_2')_{\times n} \rangle$
        \State $\psi_a(k) \gets  \langle \varphi_0, (\psi_1 , \psi_2)_{\times k} \rangle$
        \For{$k = 1$ to $n$}
            \State $\psi_{a,b}(k) \gets \psi_a(k) \vee \psi_b$
            \State $a_0 \gets \pi(a_0 | s_0; \psi_{a,b}(k))$ \LComment{\quad\quad Query the policy for the first action}
            \If{$a_0 = b$}
                \State $k^* \gets k$
                \State \textbf{break}
            \EndIf
        \EndFor
        \State Estimate $\hat{P}_{ss'}(a) \gets \text{Solve}(P^{k^* - 1/2} = (1-P)^n)$
        \State \Return $\hat{P}_{ss'}(a)$
    \EndIf
    
\EndFunction
\end{algorithmic}
\end{algorithm}

\newpage
\section{Experiments}\label{appendix: experiments}

Here we detail the experiment setup including the environment, agent and results.

\textbf{Environment.} Our environment is a cMP 
\Cref{def: cmp} comprising 20 states and 5 actions, and satisfying \Cref{assumption: environment}. 
It has a randomly generated transition function with a sparsity constraint such that each state-action pair has at most 5 outcomes that occur with non-zero probability, so as to ensure that navigating eventually to a given goal-state is non-trivial (e.g. is not achieved by all deterministic policies).

\textbf{Agent.} The agent is model based, with the model learned from experienced generated by sampling state-action trajectories from the environment under the maximally random policy of a given number of time steps $N_\text{samples} \in \{500, 1000, 2000, 3000, 4000, 5000, 6000, 7000, 8000, 9000, 10,000 \}$.
Note that \Cref{alg:estimate_p_simple} does not have access to the agents internal world model (the algorithm takes as input only the agent's policy). 
\Cref{alg:estimate_p_simple} queries the agent with different composite goals of the form $\psi_{a,b}(n, m)$, and the agent determines the optimal policy with respect to its world model, which corresponds to 1) at $t = 0$ taking action $A = a$ if the agent believes $P_{ss'}(a)^n > (1-P_{ss'}(a))^m$ else $A = b$ 2) identifying a deterministic policy that eventually reaches the target state $S = s$ from any other state, and taking action $A = a$ in $S = s$. 

\textbf{Experimental setup.} We train 10 agents for each sample size $N_\text{samples}$, with a different random seed for the experience trajectories, and take the average of the experimental results over the set of agents with the same sample size. 
For each agent we run \Cref{alg:estimate_p_simple} for different max goal depths $N\in \{10, 20, 50, 75, 100, 200, 300, 400, 500, 600 \}$, and record the regret $\delta$ for each input goal, which is $1-P(\tau \models \psi_{n, m} \mid \pi ) / P(\tau \models \psi_{n, m} \mid \pi^* )$ where $P(\tau \models \psi_{n, m} \mid \pi )$ is the probability the agent achieves the goal agent's policy and $P(\tau \models \psi_{n, m} \mid \pi^* )$ is the probability that the optimal policy achieves the goal. 
We then calculate the average regret $\langle \delta \rangle$ all goals the agent is queried with by \Cref{alg:estimate_p_simple}, and the average error $\langle \epsilon\rangle$ (averaged over all state-action-outcome tuples) for the estimated transition function returned by \Cref{alg:estimate_p_simple}.
We determine $N_\text{max}(\langle \delta \rangle = k)$ through least-squares regression of $N$ (goal depth) v.s. $\langle \delta \rangle$ for a given agent. 

\textbf{Results.}

\begin{table*}[htbp]
  \centering
  \caption{Mean Error and Standard Deviation for Different $N_{\text{samples}}$ and $N_{\text{depth}}$ Values}
  \label{tab:mean_error_results_final_fixed}
  \resizebox{\textwidth}{!}{
  \small 
  \begin{tabular}{l *{11}{c}} 
    \toprule
    \textbf{$N_{\text{depth}}$} & \multicolumn{11}{c}{\textbf{$N_{\text{samples}}$}} \\ 
    \cmidrule(lr){2-12} 
     & {500} & {1000} & {2000} & {3000} & {4000} & {5000} & {6000} & {7000} & {8000} & {9000} & {10000} \\ 
    \midrule
    10  & 0.171 $\pm$ 0.007 & 0.137 $\pm$ 0.008 & 0.111 $\pm$ 0.009 & 0.097 $\pm$ 0.006 & 0.088 $\pm$ 0.005 & 0.082 $\pm$ 0.003 & 0.078 $\pm$ 0.003 & 0.076 $\pm$ 0.004 & 0.075 $\pm$ 0.004 & 0.072 $\pm$ 0.005 & 0.066 $\pm$ 0.005 \\
    20  & 0.160 $\pm$ 0.008 & 0.118 $\pm$ 0.008 & 0.088 $\pm$ 0.005 & 0.073 $\pm$ 0.004 & 0.064 $\pm$ 0.002 & 0.059 $\pm$ 0.003 & 0.054 $\pm$ 0.003 & 0.052 $\pm$ 0.003 & 0.049 $\pm$ 0.003 & 0.047 $\pm$ 0.002 & 0.044 $\pm$ 0.003 \\
    50  & 0.157 $\pm$ 0.008 & 0.108 $\pm$ 0.008 & 0.077 $\pm$ 0.005 & 0.063 $\pm$ 0.003 & 0.054 $\pm$ 0.003 & 0.048 $\pm$ 0.003 & 0.044 $\pm$ 0.003 & 0.041 $\pm$ 0.003 & 0.039 $\pm$ 0.003 & 0.037 $\pm$ 0.002 & 0.034 $\pm$ 0.002 \\
    75  & 0.157 $\pm$ 0.008 & 0.107 $\pm$ 0.008 & 0.075 $\pm$ 0.005 & 0.061 $\pm$ 0.003 & 0.052 $\pm$ 0.003 & 0.047 $\pm$ 0.002 & 0.042 $\pm$ 0.002 & 0.040 $\pm$ 0.003 & 0.038 $\pm$ 0.003 & 0.035 $\pm$ 0.002 & 0.033 $\pm$ 0.002 \\
    100 & 0.156 $\pm$ 0.008 & 0.106 $\pm$ 0.008 & 0.074 $\pm$ 0.004 & 0.060 $\pm$ 0.003 & 0.051 $\pm$ 0.002 & 0.046 $\pm$ 0.002 & 0.041 $\pm$ 0.002 & 0.039 $\pm$ 0.003 & 0.037 $\pm$ 0.003 & 0.034 $\pm$ 0.002 & 0.032 $\pm$ 0.002 \\
    200 & 0.155 $\pm$ 0.008 & 0.105 $\pm$ 0.007 & 0.073 $\pm$ 0.004 & 0.059 $\pm$ 0.003 & 0.050 $\pm$ 0.002 & 0.045 $\pm$ 0.002 & 0.040 $\pm$ 0.002 & 0.038 $\pm$ 0.003 & 0.036 $\pm$ 0.003 & 0.034 $\pm$ 0.002 & 0.031 $\pm$ 0.002 \\
    300 & 0.155 $\pm$ 0.008 & 0.104 $\pm$ 0.007 & 0.072 $\pm$ 0.004 & 0.058 $\pm$ 0.003 & 0.049 $\pm$ 0.002 & 0.044 $\pm$ 0.002 & 0.040 $\pm$ 0.002 & 0.038 $\pm$ 0.003 & 0.036 $\pm$ 0.003 & 0.033 $\pm$ 0.002 & 0.031 $\pm$ 0.002 \\
    400 & 0.155 $\pm$ 0.008 & 0.104 $\pm$ 0.007 & 0.072 $\pm$ 0.004 & 0.058 $\pm$ 0.003 & 0.049 $\pm$ 0.002 & 0.044 $\pm$ 0.002 & 0.040 $\pm$ 0.002 & 0.038 $\pm$ 0.003 & 0.035 $\pm$ 0.003 & 0.033 $\pm$ 0.002 & 0.031 $\pm$ 0.002 \\
    500 & 0.155 $\pm$ 0.008 & 0.104 $\pm$ 0.007 & 0.072 $\pm$ 0.004 & 0.058 $\pm$ 0.003 & 0.049 $\pm$ 0.002 & 0.044 $\pm$ 0.002 & 0.040 $\pm$ 0.002 & 0.037 $\pm$ 0.003 & 0.035 $\pm$ 0.003 & 0.033 $\pm$ 0.002 & 0.031 $\pm$ 0.002 \\
    600 & 0.155 $\pm$ 0.008 & 0.104 $\pm$ 0.007 & 0.072 $\pm$ 0.004 & 0.058 $\pm$ 0.003 & 0.049 $\pm$ 0.002 & 0.044 $\pm$ 0.002 & 0.040 $\pm$ 0.002 & 0.037 $\pm$ 0.003 & 0.035 $\pm$ 0.003 & 0.034 $\pm$ 0.002 & 0.031 $\pm$ 0.002 \\
    \bottomrule
  \end{tabular}%
  } 
\end{table*}

\bibliography{main}

\begin{thebibliography}{101}
\providecommand{\natexlab}[1]{#1}
\providecommand{\url}[1]{\texttt{#1}}
\expandafter\ifx\csname urlstyle\endcsname\relax
  \providecommand{\doi}[1]{doi: #1}\else
  \providecommand{\doi}{doi: \begingroup \urlstyle{rm}\Url}\fi

\bibitem[Abdou et~al.(2021)Abdou, Kulmizev, Hershcovich, Frank, Pavlick, and S{\o}gaard]{abdou2021can}
Abdou, M., Kulmizev, A., Hershcovich, D., Frank, S., Pavlick, E., and S{\o}gaard, A.
\newblock Can language models encode perceptual structure without grounding? a case study in color.
\newblock \emph{arXiv preprint arXiv:2109.06129}, 2021.

\bibitem[Abramson et~al.(2024)Abramson, Adler, Dunger, Evans, Green, Pritzel, Ronneberger, Willmore, Ballard, Bambrick, et~al.]{abramson2024accurate}
Abramson, J., Adler, J., Dunger, J., Evans, R., Green, T., Pritzel, A., Ronneberger, O., Willmore, L., Ballard, A.~J., Bambrick, J., et~al.
\newblock Accurate structure prediction of biomolecular interactions with alphafold 3.
\newblock \emph{Nature}, 630\penalty0 (8016):\penalty0 493--500, 2024.

\bibitem[Alain \& Bengio(2016)Alain and Bengio]{alain2016understanding}
Alain, G. and Bengio, Y.
\newblock Understanding intermediate layers using linear classifier probes.
\newblock \emph{arXiv preprint arXiv:1610.01644}, 2016.

\bibitem[Allers(1944)]{allers1944microcosmus}
Allers, R.
\newblock Microcosmus: From anaximandros to paracelsus.
\newblock \emph{Traditio}, pp.\  319--407, 1944.

\bibitem[Amin \& Singh(2016)Amin and Singh]{amin2016towards}
Amin, K. and Singh, S.
\newblock Towards resolving unidentifiability in inverse reinforcement learning.
\newblock \emph{arXiv preprint arXiv:1601.06569}, 2016.

\bibitem[Amodei et~al.(2016)Amodei, Olah, Steinhardt, Christiano, Schulman, and Man{\'e}]{amodei2016concrete}
Amodei, D., Olah, C., Steinhardt, J., Christiano, P., Schulman, J., and Man{\'e}, D.
\newblock Concrete problems in ai safety.
\newblock \emph{arXiv preprint arXiv:1606.06565}, 2016.

\bibitem[Armstrong \& O'Rorke(2017)Armstrong and O'Rorke]{armstrong2017good}
Armstrong, S. and O'Rorke, X.
\newblock Good and safe uses of ai oracles.
\newblock \emph{arXiv preprint arXiv:1711.05541}, 2017.

\bibitem[{\AA}str{\"o}m \& Murray(2021){\AA}str{\"o}m and Murray]{aastrom2021feedback}
{\AA}str{\"o}m, K.~J. and Murray, R.
\newblock \emph{Feedback systems: an introduction for scientists and engineers}.
\newblock Princeton university press, 2021.

\bibitem[Baez(2016)]{Baezgoodregulator}
Baez, J.
\newblock The internal model principle.
\newblock Azimuth. \url{https://johncarlosbaez.wordpress.com/2016/01/27/the-good-regulator-theorem/}, 2016.
\newblock Accessed: 2023-10-17.

\bibitem[Baier \& Katoen(2008)Baier and Katoen]{baier2008principles}
Baier, C. and Katoen, J.-P.
\newblock \emph{Principles of model checking}.
\newblock MIT press, 2008.

\bibitem[Baker et~al.(2007)Baker, Tenenbaum, and Saxe]{baker2007goal}
Baker, C.~L., Tenenbaum, J.~B., and Saxe, R.~R.
\newblock Goal inference as inverse planning.
\newblock In \emph{Proceedings of the annual meeting of the cognitive science society}, volume~29, 2007.

\bibitem[Bareinboim et~al.(2022)Bareinboim, Correa, Ibeling, and Icard]{bareinboim2022pearl}
Bareinboim, E., Correa, J.~D., Ibeling, D., and Icard, T.
\newblock On pearl’s hierarchy and the foundations of causal inference.
\newblock In \emph{Probabilistic and causal inference: the works of judea pearl}, pp.\  507--556. 2022.

\bibitem[Bellot(2023)]{bellot2023towards}
Bellot, A.
\newblock Towards bounding causal effects under markov equivalence.
\newblock \emph{arXiv preprint arXiv:2311.07259}, 2023.

\bibitem[Bengio et~al.(2024)Bengio, Cohen, Malkin, MacDermott, Fornasiere, Greiner, and Kaddar]{bengio2024can}
Bengio, Y., Cohen, M.~K., Malkin, N., MacDermott, M., Fornasiere, D., Greiner, P., and Kaddar, Y.
\newblock Can a bayesian oracle prevent harm from an agent?
\newblock \emph{arXiv preprint arXiv:2408.05284}, 2024.

\bibitem[Bengio et~al.(2025)Bengio, Cohen, Fornasiere, Ghosn, Greiner, MacDermott, Mindermann, Oberman, Richardson, Richardson, et~al.]{bengio2025superintelligent}
Bengio, Y., Cohen, M., Fornasiere, D., Ghosn, J., Greiner, P., MacDermott, M., Mindermann, S., Oberman, A., Richardson, J., Richardson, O., et~al.
\newblock Superintelligent agents pose catastrophic risks: Can scientist ai offer a safer path?
\newblock \emph{arXiv preprint arXiv:2502.15657}, 2025.

\bibitem[Black et~al.(2024)Black, Brown, Driess, Esmail, Equi, Finn, Fusai, Groom, Hausman, Ichter, et~al.]{black2024pi_0}
Black, K., Brown, N., Driess, D., Esmail, A., Equi, M., Finn, C., Fusai, N., Groom, L., Hausman, K., Ichter, B., et~al.
\newblock $\pi_0 $: A vision-language-action flow model for general robot control.
\newblock \emph{arXiv preprint arXiv:2410.24164}, 2024.

\bibitem[Box \& Draper(1987)Box and Draper]{box1987empirical}
Box, G.~E. and Draper, N.~R.
\newblock \emph{Empirical model-building and response surfaces.}
\newblock John Wiley \& Sons, 1987.

\bibitem[Bratman(1987)]{bratman1987intention}
Bratman, M.
\newblock Intention, plans, and practical reason, 1987.

\bibitem[Bricken et~al.(2023)Bricken, Templeton, Batson, Chen, Jermyn, Conerly, Turner, Anil, Denison, Askell, et~al.]{bricken2023towards}
Bricken, T., Templeton, A., Batson, J., Chen, B., Jermyn, A., Conerly, T., Turner, N., Anil, C., Denison, C., Askell, A., et~al.
\newblock Towards monosemanticity: Decomposing language models with dictionary learning.
\newblock \emph{Transformer Circuits Thread}, 2, 2023.

\bibitem[Brohan et~al.(2023)Brohan, Brown, Carbajal, Chebotar, Chen, Choromanski, Ding, Driess, Dubey, Finn, et~al.]{brohan2023rt}
Brohan, A., Brown, N., Carbajal, J., Chebotar, Y., Chen, X., Choromanski, K., Ding, T., Driess, D., Dubey, A., Finn, C., et~al.
\newblock Rt-2: Vision-language-action models transfer web knowledge to robotic control.
\newblock \emph{arXiv preprint arXiv:2307.15818}, 2023.

\bibitem[Brooks(1991)]{brooks1991intelligence}
Brooks, R.~A.
\newblock Intelligence without representation.
\newblock \emph{Artificial intelligence}, 47\penalty0 (1-3):\penalty0 139--159, 1991.

\bibitem[Brown et~al.(2020)Brown, Mann, Ryder, Subbiah, Kaplan, Dhariwal, Neelakantan, Shyam, Sastry, Askell, et~al.]{brown2020language}
Brown, T., Mann, B., Ryder, N., Subbiah, M., Kaplan, J.~D., Dhariwal, P., Neelakantan, A., Shyam, P., Sastry, G., Askell, A., et~al.
\newblock Language models are few-shot learners.
\newblock \emph{Advances in neural information processing systems}, 33:\penalty0 1877--1901, 2020.

\bibitem[Brunke et~al.(2022)Brunke, Greeff, Hall, Yuan, Zhou, Panerati, and Schoellig]{brunke2022safe}
Brunke, L., Greeff, M., Hall, A.~W., Yuan, Z., Zhou, S., Panerati, J., and Schoellig, A.~P.
\newblock Safe learning in robotics: From learning-based control to safe reinforcement learning.
\newblock \emph{Annual Review of Control, Robotics, and Autonomous Systems}, 5\penalty0 (1):\penalty0 411--444, 2022.

\bibitem[Bush et~al.(2025)Bush, Chung, Anwar, Garriga-Alonso, and Krueger]{bushinterpreting}
Bush, T., Chung, S., Anwar, U., Garriga-Alonso, A., and Krueger, D.
\newblock Interpreting emergent planning in model-free reinforcement learning.
\newblock In \emph{The Thirteenth International Conference on Learning Representations}, 2025.

\bibitem[Camacho et~al.(2019)Camacho, Icarte, Klassen, Valenzano, and McIlraith]{camacho2019ltl}
Camacho, A., Icarte, R.~T., Klassen, T.~Q., Valenzano, R.~A., and McIlraith, S.~A.
\newblock Ltl and beyond: Formal languages for reward function specification in reinforcement learning.
\newblock In \emph{IJCAI}, volume~19, pp.\  6065--6073, 2019.

\bibitem[Chockler \& Halpern(2004)Chockler and Halpern]{chockler2004responsibility}
Chockler, H. and Halpern, J.~Y.
\newblock Responsibility and blame: A structural-model approach.
\newblock \emph{Journal of Artificial Intelligence Research}, 22:\penalty0 93--115, 2004.

\bibitem[Christiano et~al.(2021)Christiano, Cotra, and Xu]{christiano2021eliciting}
Christiano, P., Cotra, A., and Xu, M.
\newblock Eliciting latent knowledge: How to tell if your eyes deceive you.
\newblock \emph{Technical report, Alignment Research Center}, 2021.

\bibitem[Chua et~al.(2018)Chua, Calandra, McAllister, and Levine]{chua2018deep}
Chua, K., Calandra, R., McAllister, R., and Levine, S.
\newblock Deep reinforcement learning in a handful of trials using probabilistic dynamics models.
\newblock \emph{Advances in neural information processing systems}, 31, 2018.

\bibitem[Conant \& Ross~Ashby(1970)Conant and Ross~Ashby]{conant1970every}
Conant, R.~C. and Ross~Ashby, W.
\newblock Every good regulator of a system must be a model of that system.
\newblock \emph{International journal of systems science}, 1\penalty0 (2):\penalty0 89--97, 1970.

\bibitem[Dalrymple et~al.(2024)Dalrymple, Skalse, Bengio, Russell, Tegmark, Seshia, Omohundro, Szegedy, Goldhaber, Ammann, et~al.]{dalrymple2024towards}
Dalrymple, D., Skalse, J., Bengio, Y., Russell, S., Tegmark, M., Seshia, S., Omohundro, S., Szegedy, C., Goldhaber, B., Ammann, N., et~al.
\newblock Towards guaranteed safe ai: A framework for ensuring robust and reliable ai systems.
\newblock \emph{arXiv preprint arXiv:2405.06624}, 2024.

\bibitem[Demri \& Schnoebelen(2002)Demri and Schnoebelen]{demri2002complexity}
Demri, S. and Schnoebelen, P.
\newblock The complexity of propositional linear temporal logics in simple cases.
\newblock \emph{Information and Computation}, 174\penalty0 (1):\penalty0 84--103, 2002.

\bibitem[Ding et~al.(2014)Ding, Smith, Belta, and Rus]{ding2014optimal}
Ding, X., Smith, S.~L., Belta, C., and Rus, D.
\newblock Optimal control of markov decision processes with linear temporal logic constraints.
\newblock \emph{IEEE Transactions on Automatic Control}, 59\penalty0 (5):\penalty0 1244--1257, 2014.

\bibitem[Driess et~al.(2023)Driess, Xia, Sajjadi, Lynch, Chowdhery, Ichter, Wahid, Tompson, Vuong, Yu, et~al.]{driess2023palm}
Driess, D., Xia, F., Sajjadi, M.~S., Lynch, C., Chowdhery, A., Ichter, B., Wahid, A., Tompson, J., Vuong, Q., Yu, T., et~al.
\newblock Palm-e: An embodied multimodal language model.
\newblock In \emph{International Conference on Machine Learning}, pp.\  8469--8488. PMLR, 2023.

\bibitem[Dulac-Arnold et~al.(2019)Dulac-Arnold, Mankowitz, and Hester]{dulac2019challenges}
Dulac-Arnold, G., Mankowitz, D., and Hester, T.
\newblock Challenges of real-world reinforcement learning.
\newblock \emph{arXiv preprint arXiv:1904.12901}, 2019.

\bibitem[Dunbar(1998)]{dunbar1998social}
Dunbar, R.~I.
\newblock The social brain hypothesis.
\newblock \emph{Evolutionary Anthropology: Issues, News, and Reviews: Issues, News, and Reviews}, 6\penalty0 (5):\penalty0 178--190, 1998.

\bibitem[Dzifcak et~al.(2009)Dzifcak, Scheutz, Baral, and Schermerhorn]{dzifcak2009and}
Dzifcak, J., Scheutz, M., Baral, C., and Schermerhorn, P.
\newblock What to do and how to do it: Translating natural language directives into temporal and dynamic logic representation for goal management and action execution.
\newblock In \emph{2009 IEEE International Conference on Robotics and Automation}, pp.\  4163--4168. IEEE, 2009.

\bibitem[Fagin et~al.(2004)Fagin, Halpern, Moses, and Vardi]{fagin2004reasoning}
Fagin, R., Halpern, J.~Y., Moses, Y., and Vardi, M.
\newblock \emph{Reasoning about knowledge}.
\newblock MIT press, 2004.

\bibitem[Farquhar et~al.(2022)Farquhar, Carey, and Everitt]{farquhar2022path}
Farquhar, S., Carey, R., and Everitt, T.
\newblock Path-specific objectives for safer agent incentives.
\newblock In \emph{Proceedings of the AAAI Conference on Artificial Intelligence}, volume~36, pp.\  9529--9538, 2022.

\bibitem[Farquhar et~al.(2025)Farquhar, Varma, Lindner, Elson, Biddulph, Goodfellow, and Shah]{farquhar2025mona}
Farquhar, S., Varma, V., Lindner, D., Elson, D., Biddulph, C., Goodfellow, I., and Shah, R.
\newblock Mona: Myopic optimization with non-myopic approval can mitigate multi-step reward hacking.
\newblock \emph{arXiv preprint arXiv:2501.13011}, 2025.

\bibitem[Friston(2010)]{friston2010free}
Friston, K.
\newblock The free-energy principle: a unified brain theory?
\newblock \emph{Nature reviews neuroscience}, 11\penalty0 (2):\penalty0 127--138, 2010.

\bibitem[Friston(2013)]{friston2013active}
Friston, K.
\newblock Active inference and free energy.
\newblock \emph{Behavioral and brain sciences}, 36\penalty0 (3):\penalty0 212, 2013.

\bibitem[Ghallab et~al.(2004)Ghallab, Nau, and Traverso]{ghallab2004automated}
Ghallab, M., Nau, D., and Traverso, P.
\newblock \emph{Automated Planning: theory and practice}.
\newblock Elsevier, 2004.

\bibitem[Glanois et~al.(2024)Glanois, Weng, Zimmer, Li, Yang, Hao, and Liu]{glanois2024survey}
Glanois, C., Weng, P., Zimmer, M., Li, D., Yang, T., Hao, J., and Liu, W.
\newblock A survey on interpretable reinforcement learning.
\newblock \emph{Machine Learning}, 113\penalty0 (8):\penalty0 5847--5890, 2024.

\bibitem[Gregory(1980)]{gregory1980perceptions}
Gregory, R.~L.
\newblock Perceptions as hypotheses.
\newblock \emph{Philosophical Transactions of the Royal Society of London. B, Biological Sciences}, 290\penalty0 (1038):\penalty0 181--197, 1980.

\bibitem[Gurnee \& Tegmark(2023{\natexlab{a}})Gurnee and Tegmark]{2310.02207}
Gurnee, W. and Tegmark, M.
\newblock Language models represent space and time.
\newblock \emph{arXiv preprint arXiv:2310.02207}, 2023{\natexlab{a}}.

\bibitem[Gurnee \& Tegmark(2023{\natexlab{b}})Gurnee and Tegmark]{gurnee2023language}
Gurnee, W. and Tegmark, M.
\newblock Language models represent space and time.
\newblock \emph{arXiv preprint arXiv:2310.02207}, 2023{\natexlab{b}}.

\bibitem[Ha \& Schmidhuber(2018)Ha and Schmidhuber]{ha2018world}
Ha, D. and Schmidhuber, J.
\newblock World models.
\newblock \emph{arXiv preprint arXiv:1803.10122}, 2018.

\bibitem[Hafner et~al.(2019)Hafner, Lillicrap, Fischer, Villegas, Ha, Lee, and Davidson]{hafner2019learning}
Hafner, D., Lillicrap, T., Fischer, I., Villegas, R., Ha, D., Lee, H., and Davidson, J.
\newblock Learning latent dynamics for planning from pixels.
\newblock In \emph{International conference on machine learning}, pp.\  2555--2565. PMLR, 2019.

\bibitem[Hafner et~al.(2023)Hafner, Pasukonis, Ba, and Lillicrap]{hafner2023mastering}
Hafner, D., Pasukonis, J., Ba, J., and Lillicrap, T.
\newblock Mastering diverse domains through world models.
\newblock \emph{arXiv preprint arXiv:2301.04104}, 2023.

\bibitem[Halpern \& Piermont(2024)Halpern and Piermont]{halpern2024subjective}
Halpern, J.~Y. and Piermont, E.
\newblock Subjective causality.
\newblock \emph{arXiv preprint arXiv:2401.10937}, 2024.

\bibitem[Hao et~al.(2023)Hao, Gu, Ma, Hong, Wang, Wang, and Hu]{hao2023reasoning}
Hao, S., Gu, Y., Ma, H., Hong, J.~J., Wang, Z., Wang, D.~Z., and Hu, Z.
\newblock Reasoning with language model is planning with world model.
\newblock \emph{arXiv preprint arXiv:2305.14992}, 2023.

\bibitem[Hasanbeig et~al.(2019)Hasanbeig, Kantaros, Abate, Kroening, Pappas, and Lee]{hasanbeig2019reinforcement}
Hasanbeig, M., Kantaros, Y., Abate, A., Kroening, D., Pappas, G.~J., and Lee, I.
\newblock Reinforcement learning for temporal logic control synthesis with probabilistic satisfaction guarantees.
\newblock In \emph{2019 IEEE 58th conference on decision and control (CDC)}, pp.\  5338--5343. IEEE, 2019.

\bibitem[Hills et~al.(2015)Hills, Todd, Lazer, Redish, and Couzin]{hills2015exploration}
Hills, T.~T., Todd, P.~M., Lazer, D., Redish, A.~D., and Couzin, I.~D.
\newblock Exploration versus exploitation in space, mind, and society.
\newblock \emph{Trends in cognitive sciences}, 19\penalty0 (1):\penalty0 46--54, 2015.

\bibitem[Hou et~al.(2023)Hou, Li, Fei, Stolfo, Zhou, Zeng, Bosselut, and Sachan]{hou2023towards}
Hou, Y., Li, J., Fei, Y., Stolfo, A., Zhou, W., Zeng, G., Bosselut, A., and Sachan, M.
\newblock Towards a mechanistic interpretation of multi-step reasoning capabilities of language models.
\newblock \emph{arXiv preprint arXiv:2310.14491}, 2023.

\bibitem[Huang(2020)]{huang2020model}
Huang, Q.
\newblock Model-based or model-free, a review of approaches in reinforcement learning.
\newblock In \emph{2020 International Conference on Computing and Data Science (CDS)}, pp.\  219--221. IEEE, 2020.

\bibitem[Jackermeier \& Abate(2024)Jackermeier and Abate]{jackermeier2024deepltl}
Jackermeier, M. and Abate, A.
\newblock Deepltl: Learning to efficiently satisfy complex ltl specifications.
\newblock \emph{International Conference on Learning Representations (ICLR) 2025}, 2024.

\bibitem[Jackermeier \& Abate(2025)Jackermeier and Abate]{jackermeierdeepltl}
Jackermeier, M. and Abate, A.
\newblock Deepltl: Learning to efficiently satisfy complex ltl specifications for multi-task rl.
\newblock In \emph{The Thirteenth International Conference on Learning Representations}, 2025.

\bibitem[Johnson-Laird(1983)]{johnson1983mental}
Johnson-Laird, P.~N.
\newblock \emph{Mental models: Towards a cognitive science of language, inference, and consciousness}.
\newblock Number~6. Harvard University Press, 1983.

\bibitem[Kahneman(2011)]{kahneman2011thinking}
Kahneman, D.
\newblock \emph{Thinking, fast and slow}.
\newblock macmillan, 2011.

\bibitem[Karvonen(2024)]{karvonen2024emergent}
Karvonen, A.
\newblock Emergent world models and latent variable estimation in chess-playing language models.
\newblock \emph{arXiv preprint arXiv:2403.15498}, 2024.

\bibitem[Kuo et~al.(2020)Kuo, Katz, and Barbu]{kuo2020encoding}
Kuo, Y.-L., Katz, B., and Barbu, A.
\newblock Encoding formulas as deep networks: Reinforcement learning for zero-shot execution of ltl formulas.
\newblock In \emph{2020 IEEE/RSJ International Conference on Intelligent Robots and Systems (IROS)}, pp.\  5604--5610. IEEE, 2020.

\bibitem[Lake et~al.(2017)Lake, Ullman, Tenenbaum, and Gershman]{lake2017building}
Lake, B.~M., Ullman, T.~D., Tenenbaum, J.~B., and Gershman, S.~J.
\newblock Building machines that learn and think like people.
\newblock \emph{Behavioral and brain sciences}, 40:\penalty0 e253, 2017.

\bibitem[LeCun(2022)]{lecun2022path}
LeCun, Y.
\newblock A path towards autonomous machine intelligence version 0.9. 2, 2022-06-27.
\newblock \emph{Open Review}, 62\penalty0 (1):\penalty0 1--62, 2022.

\bibitem[Leike et~al.(2018)Leike, Krueger, Everitt, Martic, Maini, and Legg]{leike2018scalable}
Leike, J., Krueger, D., Everitt, T., Martic, M., Maini, V., and Legg, S.
\newblock Scalable agent alignment via reward modeling: a research direction.
\newblock \emph{arXiv preprint arXiv:1811.07871}, 2018.

\bibitem[Li et~al.(2022)Li, Hopkins, Bau, Vi{\'e}gas, Pfister, and Wattenberg]{li2022emergent}
Li, K., Hopkins, A.~K., Bau, D., Vi{\'e}gas, F., Pfister, H., and Wattenberg, M.
\newblock Emergent world representations: Exploring a sequence model trained on a synthetic task.
\newblock \emph{arXiv preprint arXiv:2210.13382}, 2022.

\bibitem[Li et~al.(2017)Li, Vasile, and Belta]{li2017reinforcement}
Li, X., Vasile, C.-I., and Belta, C.
\newblock Reinforcement learning with temporal logic rewards.
\newblock In \emph{2017 IEEE/RSJ International Conference on Intelligent Robots and Systems (IROS)}, pp.\  3834--3839. IEEE, 2017.

\bibitem[Littman et~al.(2017)Littman, Topcu, Fu, Isbell, Wen, and MacGlashan]{littman2017environment}
Littman, M.~L., Topcu, U., Fu, J., Isbell, C., Wen, M., and MacGlashan, J.
\newblock Environment-independent task specifications via gltl.
\newblock \emph{arXiv preprint arXiv:1704.04341}, 2017.

\bibitem[Liu et~al.(2022)Liu, Zhu, and Zhang]{liu2022goal}
Liu, M., Zhu, M., and Zhang, W.
\newblock Goal-conditioned reinforcement learning: Problems and solutions.
\newblock \emph{arXiv preprint arXiv:2201.08299}, 2022.

\bibitem[Locke \& Latham(2013)Locke and Latham]{locke2013goal}
Locke, E.~A. and Latham, G.~P.
\newblock Goal setting theory, 1990.
\newblock 2013.

\bibitem[Lockwood \& Si(2022)Lockwood and Si]{lockwood2022review}
Lockwood, O. and Si, M.
\newblock A review of uncertainty for deep reinforcement learning.
\newblock In \emph{Proceedings of the AAAI Conference on Artificial Intelligence and Interactive Digital Entertainment}, volume~18, pp.\  155--162, 2022.

\bibitem[Merchant et~al.(2023)Merchant, Batzner, Schoenholz, Aykol, Cheon, and Cubuk]{merchant2023scaling}
Merchant, A., Batzner, S., Schoenholz, S.~S., Aykol, M., Cheon, G., and Cubuk, E.~D.
\newblock Scaling deep learning for materials discovery.
\newblock \emph{Nature}, 624\penalty0 (7990):\penalty0 80--85, 2023.

\bibitem[Ng et~al.(2000)Ng, Russell, et~al.]{ng2000algorithms}
Ng, A.~Y., Russell, S., et~al.
\newblock Algorithms for inverse reinforcement learning.
\newblock In \emph{Icml}, volume~1, pp.\ ~2, 2000.

\bibitem[Pearl(2018)]{pearl2018theoretical}
Pearl, J.
\newblock Theoretical impediments to machine learning with seven sparks from the causal revolution.
\newblock \emph{arXiv preprint arXiv:1801.04016}, 2018.

\bibitem[Pnueli(1977)]{pnueli1977temporal}
Pnueli, A.
\newblock The temporal logic of programs.
\newblock In \emph{18th annual symposium on foundations of computer science (sfcs 1977)}, pp.\  46--57. ieee, 1977.

\bibitem[Puterman(2014)]{puterman2014markov}
Puterman, M.~L.
\newblock \emph{Markov decision processes: discrete stochastic dynamic programming}.
\newblock John Wiley \& Sons, 2014.

\bibitem[Qiu et~al.(2023)Qiu, Mao, and Zhu]{qiu2023instructing}
Qiu, W., Mao, W., and Zhu, H.
\newblock Instructing goal-conditioned reinforcement learning agents with temporal logic objectives.
\newblock \emph{Advances in Neural Information Processing Systems}, 36:\penalty0 39147--39175, 2023.

\bibitem[Qiu et~al.(2024)Qiu, Mao, and Zhu]{qiu2024instructing}
Qiu, W., Mao, W., and Zhu, H.
\newblock Instructing goal-conditioned reinforcement learning agents with temporal logic objectives.
\newblock \emph{Advances in Neural Information Processing Systems}, 36, 2024.

\bibitem[Raad et~al.(2024)Raad, Ahuja, Barros, Besse, Bolt, Bolton, Brownfield, Buttimore, Cant, Chakera, et~al.]{raad2024scaling}
Raad, M.~A., Ahuja, A., Barros, C., Besse, F., Bolt, A., Bolton, A., Brownfield, B., Buttimore, G., Cant, M., Chakera, S., et~al.
\newblock Scaling instructable agents across many simulated worlds.
\newblock \emph{arXiv preprint arXiv:2404.10179}, 2024.

\bibitem[Rabinowitz et~al.(2018)Rabinowitz, Perbet, Song, Zhang, Eslami, and Botvinick]{rabinowitz2018machine}
Rabinowitz, N., Perbet, F., Song, F., Zhang, C., Eslami, S.~A., and Botvinick, M.
\newblock Machine theory of mind.
\newblock In \emph{International conference on machine learning}, pp.\  4218--4227. PMLR, 2018.

\bibitem[Racani{\`e}re et~al.(2017)Racani{\`e}re, Weber, Reichert, Buesing, Guez, Jimenez~Rezende, Puigdom{\`e}nech~Badia, Vinyals, Heess, Li, et~al.]{racaniere2017imagination}
Racani{\`e}re, S., Weber, T., Reichert, D., Buesing, L., Guez, A., Jimenez~Rezende, D., Puigdom{\`e}nech~Badia, A., Vinyals, O., Heess, N., Li, Y., et~al.
\newblock Imagination-augmented agents for deep reinforcement learning.
\newblock \emph{Advances in neural information processing systems}, 30, 2017.

\bibitem[Raman et~al.(2024{\natexlab{a}})Raman, Lundy, Amouyal, Levine, Leyton-Brown, and Tennenholtz]{raman2024steer}
Raman, N., Lundy, T., Amouyal, S., Levine, Y., Leyton-Brown, K., and Tennenholtz, M.
\newblock Steer: Assessing the economic rationality of large language models.
\newblock \emph{arXiv preprint arXiv:2402.09552}, 2024{\natexlab{a}}.

\bibitem[Raman et~al.(2024{\natexlab{b}})Raman, Lundy, Amouyal, Levine, Leyton-Brown, and Tennenholtz]{ramansteer}
Raman, N.~K., Lundy, T., Amouyal, S.~J., Levine, Y., Leyton-Brown, K., and Tennenholtz, M.
\newblock Steer: Assessing the economic rationality of large language models.
\newblock In \emph{Forty-first International Conference on Machine Learning}, 2024{\natexlab{b}}.

\bibitem[Reed et~al.(2022)Reed, Zolna, Parisotto, Colmenarejo, Novikov, Barth-Maron, Gimenez, Sulsky, Kay, Springenberg, et~al.]{reed2022generalist}
Reed, S., Zolna, K., Parisotto, E., Colmenarejo, S.~G., Novikov, A., Barth-Maron, G., Gimenez, M., Sulsky, Y., Kay, J., Springenberg, J.~T., et~al.
\newblock A generalist agent.
\newblock \emph{arXiv preprint arXiv:2205.06175}, 2022.

\bibitem[Richens \& Everitt(2024)Richens and Everitt]{richensrobust}
Richens, J. and Everitt, T.
\newblock Robust agents learn causal world models.
\newblock In \emph{The Twelfth International Conference on Learning Representations}, 2024.

\bibitem[Richens et~al.(2022)Richens, Beard, and Thompson]{richens2022counterfactual}
Richens, J., Beard, R., and Thompson, D.~H.
\newblock Counterfactual harm.
\newblock \emph{Advances in Neural Information Processing Systems}, 35:\penalty0 36350--36365, 2022.

\bibitem[Safron(2020)]{safron2020integrated}
Safron, A.
\newblock An integrated world modeling theory (iwmt) of consciousness: combining integrated information and global neuronal workspace theories with the free energy principle and active inference framework; toward solving the hard problem and characterizing agentic causation.
\newblock \emph{Frontiers in artificial intelligence}, 3:\penalty0 520574, 2020.

\bibitem[Savage(1972)]{savage1972foundations}
Savage, L.~J.
\newblock \emph{The foundations of statistics}.
\newblock Courier Corporation, 1972.

\bibitem[Schaul et~al.(2015)Schaul, Horgan, Gregor, and Silver]{schaul2015universal}
Schaul, T., Horgan, D., Gregor, K., and Silver, D.
\newblock Universal value function approximators.
\newblock In \emph{International conference on machine learning}, pp.\  1312--1320. PMLR, 2015.

\bibitem[Schrittwieser et~al.(2020)Schrittwieser, Antonoglou, Hubert, Simonyan, Sifre, Schmitt, Guez, Lockhart, Hassabis, Graepel, et~al.]{schrittwieser2020mastering}
Schrittwieser, J., Antonoglou, I., Hubert, T., Simonyan, K., Sifre, L., Schmitt, S., Guez, A., Lockhart, E., Hassabis, D., Graepel, T., et~al.
\newblock Mastering atari, go, chess and shogi by planning with a learned model.
\newblock \emph{Nature}, 588\penalty0 (7839):\penalty0 604--609, 2020.

\bibitem[Shepard(1987)]{shepard1987toward}
Shepard, R.~N.
\newblock Toward a universal law of generalization for psychological science.
\newblock \emph{Science}, 237\penalty0 (4820):\penalty0 1317--1323, 1987.

\bibitem[Sutton(2018)]{sutton2018reinforcement}
Sutton, R.~S.
\newblock Reinforcement learning: an introduction.
\newblock \emph{A Bradford Book}, 2018.

\bibitem[Tomasello(2022)]{tomasello2022evolution}
Tomasello, M.
\newblock \emph{The evolution of agency: Behavioral organization from lizards to humans}.
\newblock MIT Press, 2022.

\bibitem[Tversky \& Kahneman(1974)Tversky and Kahneman]{tversky1974judgment}
Tversky, A. and Kahneman, D.
\newblock Judgment under uncertainty: Heuristics and biases: Biases in judgments reveal some heuristics of thinking under uncertainty.
\newblock \emph{science}, 185\penalty0 (4157):\penalty0 1124--1131, 1974.

\bibitem[Vaezipoor et~al.(2021)Vaezipoor, Li, Icarte, and Mcilraith]{vaezipoor2021ltl2action}
Vaezipoor, P., Li, A.~C., Icarte, R. A.~T., and Mcilraith, S.~A.
\newblock Ltl2action: Generalizing ltl instructions for multi-task rl.
\newblock In \emph{International Conference on Machine Learning}, pp.\  10497--10508. PMLR, 2021.

\bibitem[Von~Neumann \& Morgenstern(2007)Von~Neumann and Morgenstern]{von2007theory}
Von~Neumann, J. and Morgenstern, O.
\newblock Theory of games and economic behavior: 60th anniversary commemorative edition.
\newblock In \emph{Theory of games and economic behavior}. Princeton university press, 2007.

\bibitem[Wang et~al.(2023)Wang, Xie, Jiang, Mandlekar, Xiao, Zhu, Fan, and Anandkumar]{wang2023voyager}
Wang, G., Xie, Y., Jiang, Y., Mandlekar, A., Xiao, C., Zhu, Y., Fan, L., and Anandkumar, A.
\newblock Voyager: An open-ended embodied agent with large language models.
\newblock \emph{arXiv preprint arXiv:2305.16291}, 2023.

\bibitem[Ward et~al.(2023)Ward, Toni, Belardinelli, and Everitt]{ward2023honesty}
Ward, F., Toni, F., Belardinelli, F., and Everitt, T.
\newblock Honesty is the best policy: defining and mitigating ai deception.
\newblock \emph{Advances in neural information processing systems}, 36:\penalty0 2313--2341, 2023.

\bibitem[Ward et~al.(2024)Ward, MacDermott, Belardinelli, Toni, and Everitt]{ward2024reasons}
Ward, F.~R., MacDermott, M., Belardinelli, F., Toni, F., and Everitt, T.
\newblock The reasons that agents act: Intention and instrumental goals.
\newblock \emph{arXiv preprint arXiv:2402.07221}, 2024.

\bibitem[Wentworth(2021)]{Wentworth2021goodregulator}
Wentworth, J.
\newblock Fixing the good regulator theorem.
\newblock \url{https://www.alignmentforum.org/posts/Dx9LoqsEh3gHNJMDk/fixing-the-good-regulator-theorem}, 2021.
\newblock Accessed: 2023-10-17.

\bibitem[Yao et~al.(2022)Yao, Zhao, Yu, Du, Shafran, Narasimhan, and Cao]{yao2022react}
Yao, S., Zhao, J., Yu, D., Du, N., Shafran, I., Narasimhan, K., and Cao, Y.
\newblock React: Synergizing reasoning and acting in language models.
\newblock \emph{arXiv preprint arXiv:2210.03629}, 2022.

\bibitem[Zhu et~al.(2023)Zhu, Lin, Jain, and Zhou]{zhu2023transfer}
Zhu, Z., Lin, K., Jain, A.~K., and Zhou, J.
\newblock Transfer learning in deep reinforcement learning: A survey.
\newblock \emph{IEEE Transactions on Pattern Analysis and Machine Intelligence}, 2023.

\end{thebibliography}
\bibliographystyle{icml2025}

\end{document}